\newcolumntype{P}[1]{>{\centering\arraybackslash}p{#1}}
\newcommand{\cmark}{\ding{51}}%
\newcommand{\xmark}{\ding{55}}%
\DeclareMathOperator*{\argmin}{arg\,min}
\newcommand{\E}{\mathbb{E}}
\newcommand{\R}{\mathbb{R}}
\newcommand{\C}{\mathbb{C}}
\newcommand{\p}{\mathbb{P}}
\newcommand{\Ocal}{\mathcal{O}}
\newcommand{\Hcal}{\mathcal{H}}
\newcommand{\Xcal}{\mathcal{X}}
\newcommand{\Acal}{\mathcal{A}}
\newcommand{\Htsr}{\Hcal \otimes \Hcal}
\newcommand{\mat}[1]{\mathbf{\mathrm{#1}}}
\newcommand{\krn}{\mat{\Phi}}
\newcommand{\one}{\mathds{1}}
\newcommand{\hs}{\mathrm{HS}(\mathcal{H})}
\newcommand{\tr}{\mathrm{Tr}}
\newtheorem{theorem}{Theorem}
\newtheorem{proposition}{Proposition}
\newtheorem{lemma}{Lemma}
\theoremstyle{definition}
\DeclareRobustCommand{\mybox}[2][gray!20]{%
\begin{tcolorbox}[   
        breakable,
        left=12pt,
        right=0pt,
        top=3pt,
        bottom=2pt,
        colback=#1,
        colframe=#1,
        width=\dimexpr\textwidth\relax,
        enlarge left by=0mm,
        boxsep=5pt,
        arc=0pt,outer arc=0pt,
        ]
        #2
\end{tcolorbox}
}
\begin{document}

\title[Nyström kernel PCA]{Kernel PCA with the Nyström method}
\author{Fredrik Hallgren}
\address{Department of Statistical Science \\
	 %Faculty of Mathematical and Physical Sciences \\
         University College London \\}
\email[F. Hallgren]{fredrik.hallgren@ucl.ac.uk}
\thanks{\emph{Acknowledgements.} The author is hugely grateful to John Shawe-Taylor, Dino Sejdinovic, Paul Northrop and Michael Arbel for invaluable comments and feedback.}

\setlength{\abovedisplayskip}{16pt}
\setlength{\belowdisplayskip}{20pt}

\maketitle

\begin{abstract}
The Nyström method is one of the most popular techniques for improving the scalability of kernel methods. However, it has not yet been derived for kernel PCA in line with classical PCA. In this paper we derive kernel PCA with the Nyström method, thereby providing one of the few available options to make kernel PCA scalable. We further study its statistical accuracy through a finite-sample confidence bound on the empirical reconstruction error compared to the full method. The behaviours of the method and bound are illustrated through computer experiments on multiple real-world datasets. As an application of the method we present kernel principal component regression with the Nyström method, as an alternative to Nyström kernel ridge regression for efficient regularized regression with kernels.

%...where good performance and expected behaviour are demonstrated
\vspace{12pt}

\hspace{-12pt}\textbf{Keywords:} Kernel methods, non-parametric statistics, confidence interval, dimensionality reduction, unsupervised learning, learning theory, PCA, functional PCA, PCR, MDS
\end{abstract}

\thispagestyle{empty}

\vspace{32pt}
\begin{spacing}{0}
\begin{small}
\tableofcontents
\end{small}
\end{spacing}

\newpage
%\emph{}
%\vspace{-12pt}
\section{Introduction}

\vspace{12pt}

% TODO maybe remove the second sentence to make it more concise and read better
Kernel methods generalize classical statistical methods to discover non-linear patterns in data \citep{hofmann2008kernel}. They have been demonstrated to achieve excellent performance in many application domains and it is straightforward to apply them to non-numeric data, such as graphs or text \citep{vishwanathan2010graph, lodhi2002text}. Through a near arbitrary non-linear mapping of data points into a Hilbert space they offer remarkable flexibility whilst providing a precise mathematical framework for statistical analyses. A host of linear statistical methods have been adapted to be used with kernels, including Fisher discriminant analysis (FDA) \citep{mika1999fisher}, independent component analysis (ICA) \citep{bach2002kernel}, instrumental variable (IV) regression \citep{singh2019kernel}, and many more. Kernel PCA is a non-linear version of principal component analysis (PCA), a ubiquitous method to discover the most important directions of variation in data \citep{pearson1901principal}. PCA may be used for dimensionality reduction, exploratory data analysis, anomaly detection, discriminant analysis, clustering, or as a general preprocessing step for regression or classification \citep{jolliffe2002principal, wold1987principal}.

% TODO maybe mention that the Nystrom method has been widely applied, e.g. in MDS, transformers, randomized linear algebra, etc etc " and its applicability extends way beyond the realm of kernel methods, to include randomized linear algebra, MDS and transfomer architecture in deep learning
The other side of the coin of kernel methods is their large computational requirements, as they generally scale in the number of data points rather than the number of data dimensions. As a remedy, various approximations have been proposed, such as the Nyström method, which randomly selects a smaller subset of data points and looks for solutions in their linear span. The Nyström method also plays an important role in recent state-of-the-art implementations of kernel methods \citep{rudi2017falkon, ma2017diving, meanti2020kernel, carratino2021park, frangella2021randomized}.

The need for approximate methods becomes particularly acute for kernel PCA, since it relies on the eigendecomposition of the kernel matrix, which requires about $9n^3 + \Ocal(n^2)$ floating-point operations, as opposed to $\frac{1}{3}n^3 + \Ocal(n^2)$ floating-point operations for the solution of a linear system by way of the Cholesky decomposition when performing regression \citep[Chapters~4, 8]{golub2013matrix}. Despite this fact, kernel PCA with the Nyström method has not yet been derived in line with linear PCA, significant previous research interest notwithstanding. %...several previous partial attempts notwithstanding. % ...maybe write "contributions" instead of attempts, or remove the last phrase. 

In this paper we derive kernel PCA with the Nyström method, providing orthonormal principal components in the span of the Nyström subset that maximize the variance of the data, without assuming that the data has zero mean, as well as the associated principal scores\footnote{Different conventions exist for the terminology of PCA. Throughout this paper we will take the \emph{principal components} to mean the vectors defining the subspaces that maximize the variance of the data i.e. the eigenvectors of the centred covariance operator or matrix. These are elsewhere sometimes referred to as the \emph{principal axes}.}. The principal scores are \begin{wrapfigure}{r}{0.44\textwidth}
    \vspace{-13pt}
    \hspace{-6pt}
    \includegraphics[width=0.45\textwidth]{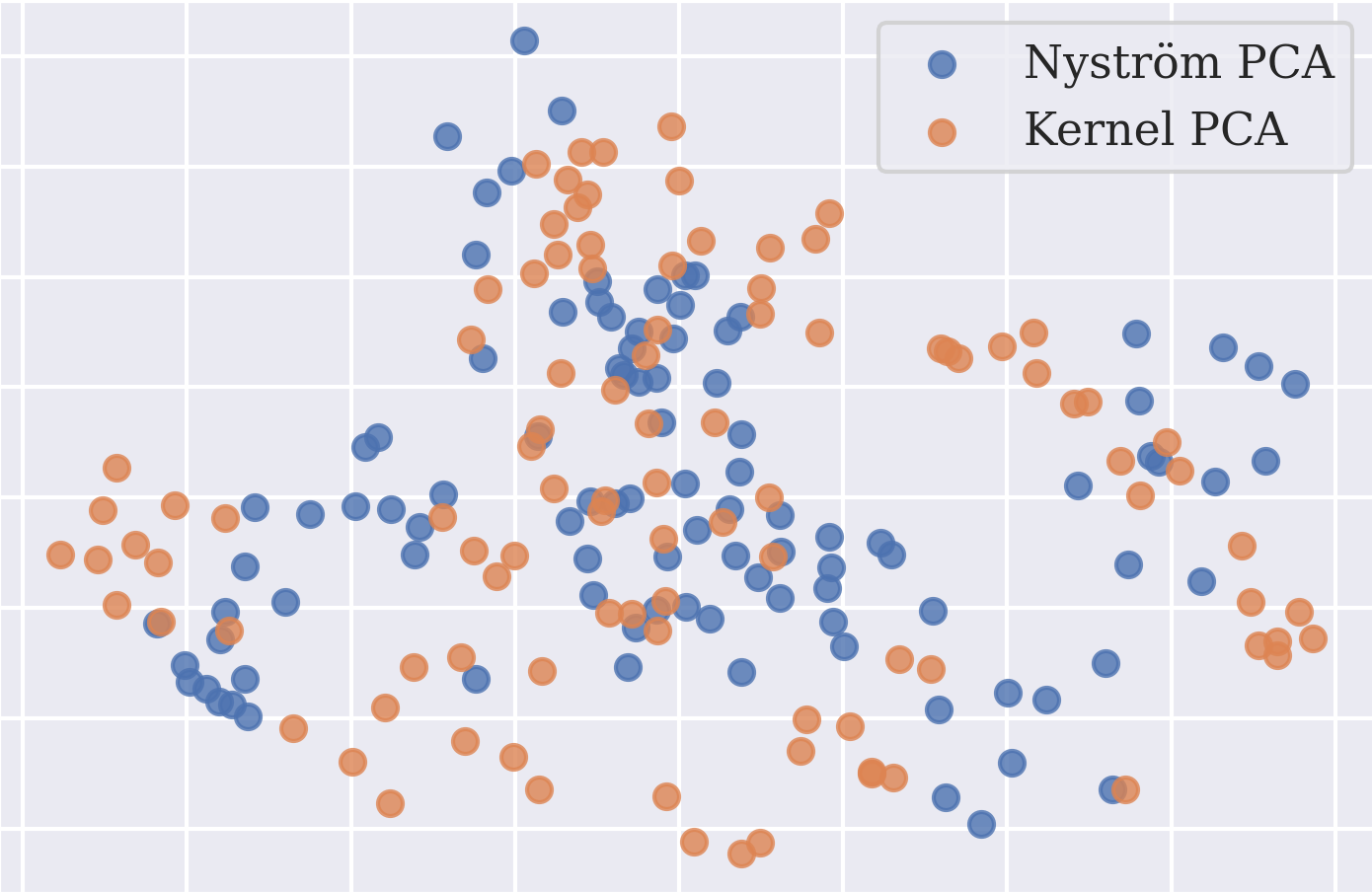}
    \vspace{-30pt}
\end{wrapfigure}
perhaps of particular interest, since they allow for the method to be used as a preprocessing step before applying supervised learning methods, by virtue of providing a new representation of data points in the new coordinate system defined by the principal components. The figure to the right shows the first two dimensions for these representations for an example with a dataset of images of handwritten digits, in comparison with standard full kernel PCA, where the Nyström subset uses only a tenth of the full dataset\footnote{Please see {\SMALL \url{https://github.com/fredhallgren/nystrompca}} for details}.

The principal scores are given as follows. First let $K_{mm}$ be $m$ randomly subsampled rows and columns of the original kernel matrix $K$, and $K_{nm}$ be the same $m$ subsampled columns. Centring the data in feature space corresponds to adjusting these matrices through
\begin{align*}
    K'_{nm} &= K_{nm} - \one_n K_{nm}  - \widetilde{K} \, \one_n^{n, m} + \one_n \widetilde{K} \, \one_n^{n, m} \\[0.5em]
    K'_{mm} &= K_{mm} - \one_n^{m, n} K_{nm} - K_{mn}\one_n^{n, m}  + \one_n^{m, n}\widetilde{K} \, \one_n^{n, m}
\end{align*}
%where $\widetilde{K} = K_{nm} K_{mm}^{-1} K_{mn}$ and $\one_n$, $\one_n^{n, m}$ and $\one_n^{m, n}$ are $n \times n$, $n \times m$ and $m \times n$ matrices respectively with all elements equal to $\frac{1}{n}$. Each element of $\one_n K_{nm}$ or $\one_n^{m, n} K_{nm}$ equals the mean of the values in $K_{nm}$ in that element's column, and each element of $\widetilde{K} \one_n^{n, m}$ or $K_{mn} \one_n^{n, m}$ equals the mean across that row. The matrices $\one_n^{m, n} \widetilde{K} \one_n^{n, m}$ and $\one_n \widetilde{K} \one_n^{n,m}$ are constant with each element equal to the sum of all elements of $\widetilde{K}$ divided by $n^2$. Now create an approximate kernel matrix through
where $\widetilde{K} = K_{nm} K_{mm}^{-1} K_{mn}$ and $\one_n$, $\one_n^{n, m}$ and $\one_n^{m, n}$ are $n \times n$, $n \times m$ and $m \times n$ matrices respectively with all elements equal to $\frac{1}{n}$. Now create an approximate kernel matrix through
\[
    \widetilde{K}' = \frac{1}{n}K_{mm}^{\prime \,-1/2} K'_{mn} K'_{nm} K_{mm}^{\prime \, -1/2}
\]
and calculate its eigenvalues $\widetilde{\lambda}_j$ and eigendecomposition $V\widetilde{\Lambda}V^T$, where $M^{-1/2} = U D^{-1/2} U^T$ for a matrix $M$ with eigendecomposition $UDU^T$. The scores are then given by $W = K'_{nm} K_{mm}^{\prime \, -1/2} V$, which is a new data matrix with observations along the rows, and the variances of the new data variables (in the columns) are given by $\widetilde{\lambda}_j$. The method has time complexity $\Ocal(nm^2)$ which is the same as when the Nyström method is applied to regression. Full details may be found in Section~\ref{sec:kpca} on page~\pageref{sec:kpca}.

The method centres the data in the feature space, as is the case for linear PCA as well as the original presentation of kernel PCA \citep{pearson1901principal, jolliffe2016principal, scholkopf1998nonlinear}. Without this adjustment, the perpendicular lines defined by the principal components are forced to go through the origin, no longer minimizing the reconstruction error in an unconstrained fashion and requiring an assumption of zero-mean data in feature space. If the assumption does not hold then the results may be wildly different from the true values. The assumption may be especially contrived for kernel methods, since for any positive kernel function, like the popular radial basis functions or Cauchy kernels, it will \emph{never hold}, since the corresponding feature maps $\phi(x_i) = k(x_i, \cdot)$ in the reproducing kernel Hilbert space are positive everywhere. Subtracting the mean from the input data will not give zero-mean data in feature space. Sometimes the first eigenvalue of the uncentred kernel matrix will account for the mean being different from zero, and the subsequent eigenvalues approximately correspond to the full set of eigenvalues of the centred kernel matrix, but this is not always the case and the correspondence is not exact. % TODO maybe skip everything from "Subtracting..." or just this sentence

We further study the statistical accuracy of the proposed method. In the special case when the number of subsampled data points for the Nyström method equals the PCA dimension, then both the empirical and true reconstruction errors of the Nyström method equal the corresponding reconstruction errors for kernel PCA constructed directly using only the subset of data points. For the general case we provide a finite-sample confidence bound (a confidence interval) with $\Ocal(m^3)$ time complexity that doesn't require that we have observed the entire dataset, only the subset of data \citep{ramachandran2020mathematical}. In line with essentially all existing results on the accuracy of standard kernel PCA we here assume that data has zero mean. The result states that with high confidence, the difference between the empirical reconstruction errors of Nyström kernel PCA and full kernel PCA is less than or equal to a data-dependent quantity
\[
    R_n(\widetilde{V}_d) - R_n(\hat{V}_d)  \le h\left(\sup_x k(x, x), \; \left\{\hat{\lambda}_j\right\}_{j=1}^{d+1}, \; m, \; n \right)
\]
which depends on the maximum value of the kernel function, the eigenvalues of the kernel matrix from the subset of randomly subsampled data points, the size of this subset $m$ and the total size of the dataset $n$, where $h(\cdot)$ is a fixed function. All quantities in the bound are known or can be calculated from the data at hand. Please see Section \ref{sec:accuracy} on page~\pageref{sec:accuracy} for the complete result.

We illustrate and evaluate the proposed method and derived confidence bound through experimental analysis using several different datasets and kernel functions. We first compare the accuracy of Nyström kernel PCA with a number of other unsupervised learning methods, where its performance is seen to be very close to full kernel PCA, whilst being much more efficient. Then we illustrate the behaviour of the bound across different PCA dimensions. The source code for all the experiments is publicly available at {\small\url{https://github.com/fredhallgren/nystrompca}}. 
%... "The method also outperforms kernel PCA with random features, the only other methods propsed to make kernel PCA more efficient without assuming zero-mean data".
%...The package includes a command-line tool to easily rerun the experiments for different parameter values... TODO maybe mention the command-line tool here or that it can be installed with the python package manager.

%An efficient C++ implementation of the method is also available as part of the \emph{Shogun Machine Learning Toolbox}\footnote{\url{https://www.shogun.ml}}, a popular open source machine learning package which is accessible from multiple different programming languages.

From the proof of the confidence bound one can deduce sharper versions of several previous concentration results based on the covariance operator, including from \cite{rosasco2010learning}, \cite{de2005learning} and \cite{blanchard2019concentration}. Please see Section \ref{sec:cor} on page~\pageref{sec:cor} for details. %Other techniques used in the proof may also be of independent interest.

To demonstrate the use of Nyström kernel PCA with supervised learning methods we apply it to the regression problem to present kernel principal component regression with the Nyström method. Principal component regression (PCR) performs a linear regression on the principal scores from the top principal components instead of the original data and introduces regularization for improved generalization. We also illustrate the method through experimental analysis and compare it to kernel ridge regression with the Nyström method. In summary, the prediction for a data point $x^*$ is given by
\[
    \hat{y} =
    \bar{y} +
    y^{\prime \, T} K'_{nm} K^{\prime \, -1/2}_{mm}V_d \widetilde{\Lambda}^{-1}_d V_d^T K^{\prime \, -1/2}_{mm} \, \widetilde{\kappa}(x^*)
\]
where $y' = (y_1 - \bar{y}, \, y_2 - \bar{y}, \, ..., \, y_n - \bar{y})^T$ and $\widetilde{\kappa}(x) = \kappa_m(x) - K_{mn} \mathbf{1}_n - \one_n^{m, n} K_{nm} K_{mm}^{-1}  \kappa_m(x)   + \one_n^{m, n}\widetilde{K} \mathbf{1}_n$ with $\mathbf{1}_n$ a length-$n$ column vector given by $\mathbf{1}_n = (\frac{1}{n}, \frac{1}{n}, \; ..., \; \frac{1}{n})^T$ and $\kappa_m(x) = (k(x_1, x), \; k(x_2, x), \; ..., \; k(x_m, x))^T$. Using similar techniques we also present a novel derivation of standard kernel PCR with centred data in feature space, where a prediction is given by
\[
    \hat{y} = \bar{y} +  y^{\prime \, T} Q_d \Lambda_d^{-1} Q_d^T \kappa'(x^*)
\]
where $Q_d \Lambda_dQ_d^T$ is the truncated eigendecomposition of $K' = K - \one_nK - K \one_n + \one_nK\one_n$ and $\kappa'(x) = \kappa(x) - \one_n \kappa(x) - K \mathbf{1}_n + \one_n K \mathbf{1}_n$ with $\kappa(x) = (k(x_1, x), k(x_2, x), ..., k(x_n, x))^T$. We refer to Section~\ref{sec:appl} on page~\pageref{sec:appl} for the full derivation and experimental results.

A summary of our main contributions is as follows
\begin{enumerate}[(1)]
    \item Deriving kernel PCA with the Nyström method
    \item A result on the accuracy in the special case of $d = m$ for both the empirical and true errors
    \item A finite-sample confidence bound for the empirical error in the general case
    \item Presenting kernel principal component regression with the Nyström method
    \item Novel specification of kernel PCR with centred regressors
    \item Sharper versions of some concentration results from previous literature
\end{enumerate}

In the next section we give an overview of previous work (Section \ref{sec:prevwork}), then go through relevant background (Section \ref{sec:background}), present the main method (Section \ref{sec:kpca}), study the special case when $d=m$ (Section \ref{sec:prelude}), provide the confidence bound on the accuracy of the method (Section \ref{sec:accuracy}), conduct experimental analysis of the method and bound (Section \ref{sec:exp}), present kernel principal component regression with the Nyström method (Section \ref{sec:appl}) and finally conclude with a summary and outlook (Section \ref{sec:conc}). Proofs are in the appendix.

\vspace{18pt}

\subsection*{Notation} Upper-case letters will be used for matrices and operators and generally for random variables, unless they represent data points before they are observed. Vectors in $\R^p$ will be denoted by small letters and parameters fitted to data often by letters from the Greek alphabet. A row vector $v$ in $\R^p$ with elements $v_1, v_2, ..., v_p$ will be written $(v_1, v_2, ..., v_p)$ and its $i$th element will also be written $(v)_i$. The transpose of a vector or matrix is $v^T$. If not stated otherwise all Euclidean vectors will be column vectors. The arithmetic mean of a vector is denoted $\bar{v}$. Indices for data points will be denoted by $i$, $r$, or  $\ell$; indices for eigenvectors or dimensions will be denoted by $j$, $k$, $p$ or $q$. Estimated quantities will often be denoted by $\hat{\cdot} \,$, approximations by $\, \widetilde{\cdot} \,$ and centred quantities by $\cdot \,'$. Empirical quantities may be superscripted or subscripted by the number of observations used in the estimate. The probability density function of a measure $\p_Y$ will be denoted by $p_Y(y)$. The symbol $Y$ will be used for a generic random variable; the symbols $T$ or $L$ for a generic operator and the symbol $M$ for a generic matrix. The linear span of a set of vectors $A$ is written $\mathrm{span}\{A\}$ or $\left\langle A \right\rangle$. The cardinality of a basis for the space $V$ is written $\dim(V)$. The real part of a complex number is denoted $\mathrm{Re}\{x + iy\} = x$.

The symbol $\Ocal(\cdot)$ denotes Big-O notation \citep{sipser2012introduction}. The function $\lambda_j(\cdot)$ returns the $j$th eigenvalue, in decreasing order, of its argument, and the symbol $\lambda_{<d}$ denotes the sum of the largest $d$ eigenvalues $\lambda_1, \lambda_2, ..., \lambda_d$. If $v$ is majorized by $u$ we write $v \succ u$. The symbol $:=$ denotes the introduction of new notation, i.e. $a := b$ means that $b$ will be denoted by $a$, and vice versa for $a =: b$. The binary operators $\vee$ and $\wedge$ are defined as $a \vee b = \max\{a,b\}$ and $a \wedge b = \min\{a,b\}$. %The notation $\otimes^n v$ is used for $v \otimes v \otimes \cdots \otimes v$ for $n$ instances of the symbol $v$.

The functional $\| \cdot \|$ denotes the operator norm or the Euclidean norm, depending on the context. For other norms the space will always be specified. For an operator $T$, we let $T^*$ denote its adjoint. The image of an operator is $\mathrm{Im}(T)$ and its null space (also called its kernel) is $\mathrm{Ker}(T)$.

\vspace{32pt}

\section{Previous work} \label{sec:prevwork}

\vspace{12pt}

The study of the statistical accuracy of kernel PCA, or of the related problems of functional PCA \citep{besse1986principal, hall2006properties2} and PCA of a Hilbert space-valued random variable \citep{besse1991approximation}, was initiated in \cite{dauxois1982asymptotic}. They demonstrated the consistency of the reconstruction error and asymptotic normality of the empirical reconstruction error and principal components about the true quantities. The asymptotics of kernel PCA was also studied in \cite{koltchinskii2000random}. A concentration inequality for the empirical reconstruction error versus its expectation, using McDiarmid's inequality \citep{mcdiarmid1989method}, was provided in \cite{shawe2002eigenspectrum} and the same authors later presented a confidence bound on the expected empirical reconstruction error versus the true error \citep{shawe2005eigenspectrum}, which is based on Rademacher complexities \citep{bartlett2002rademacher}. In this bound the expectation is with respect to the data point to be projected and the confidence with respect to different training datasets. A similar bound, as well as a version for centred kernel PCA, was later presented in \cite{blanchard2007statistical}. Approximate confidence bounds for both the principal values and components were given in \cite{hall2006properties} based on the bootstrap method \citep{davison1997bootstrap}. However, these results are not immediately applicable to kernel PCA since the kernel is defined on a compact subset of $\R \times \R$. Error bounds for the principal components were also provided in \citep{zwald2005convergence}. The first PAC-Bayes bounds for kernel PCA were recently given in \cite{haddouche2020upper}.

The statistical accuracy of kernel PCA has been widely studied under the assumption of zero-mean data in feature space. To the best of our knowledge, only one bound has been presented in previous literature for centred kernel PCA \citep{blanchard2007statistical}, but this bound is more conservative than the available bounds for uncentred kernel PCA. This is because the bound is based on a bound for uncentred kernel PCA, with an additional term to account for the error introduced by mean-adjustment. However, uncentred kernel PCA minimizes the reconstruction error under the constraint that the subspaces maximizing the variance go through the origin and so the reconstruction error will be much larger.  Consequently there is strictly no bound available for standard kernel PCA without an assumption of zero-mean data in feature space, that is as accurate as the existing bounds for uncentred kernel PCA.
% ...This can even be by a factor $10$ or more, which a comparison of the top eigenvalues of the kernel matrix versus its centred version on some real data can demonstrate.

The Nyström method has been widely studied for different settings and assumptions. Originally developed for the discretization of integral equations \citep{nystrom1930praktische, banach1932theorie}, it was adapted to kernel methods in \cite{williams2001using} and applied to regression. The accuracy of the approximate kernel matrix versus the full kernel matrix, considering the full dataset as fixed, has been studied in a number of papers, please see \cite{gittens2016revisiting} and references therein. The study of the accuracy of the Nyström method as applied to regression culminated in the seminal work by \cite{rudi2015less} as a bound on the expected regression error with general assumptions.

A recent paper \citep{sterge2020gain} applied the Nyström method to kernel PCA, but only explicitly derived the first principal component and also assumed data to have zero mean in feature space. In the current work we derive all the principal components without assuming that data has zero mean, and in addition we derive the principal scores and provide an empirical evaluation of our method. They also presented a probabilistic inequality for the true reconstruction error with respect to the empirical subspace, which depends on the maximum value of the kernel function $\sup_x k(x,x)$, the total number of data points $n$ and the covariance operator $C$ from the unknown true distribution $\p$. The main difference to our confidence bound is that we bound the empirical reconstruction error (the estimate) and only in terms of known quantities. As a corollary they also presented an asymptotic rate of convergence under the assumption of polynomial decay of the eigenvalues of $C$. 

Even more recently \cite{sterge2021statistical} presented a similar analysis to \cite{sterge2020gain}, but with one way of centring the data. This method is different from the one considered here, with the top principal component given by
\[
    \tilde{\phi}_1 = \sqrt{m} G_m^* K_{mm}^{-1/2} u_1
\]
where $u_1$ is the top eigenvector of $\frac{1}{n-1}K_{mm}^{-1/2} (K_{mn} - K_{mn} \one_n) K_{nm} K_{mm}^{-1/2}$ and $G^*_m$ is given by $ \alpha \mapsto \frac{1}{\sqrt{m}}\sum_{k=1}^m \alpha_k k(x_k, x)$. This method does not minimize the reconstruction error or maximize the variance, and it does not recover standard kernel PCA when $m=n$.

If we assume data to have zero mean in feature space, then our derived principal scores are somewhat similar to the \emph{virtual samples} of \cite{golts2016linearized} which were introduced in the context of dictionary learning \citep{aharon2006k}. These are obtained through the projection of the full dataset onto the Nyström subset and may also be used as a drop-in replacement for the original data points.

Another related method was described in \cite{iosifidis2016nystrom} . They derive low-rank data representations from the Nyström kernel matrix, similar to the \emph{virtual samples} above, including for novel unseen data points. However, the representations for novel data points require calculation of the top eigenvalues of the full kernel matrix, and so is $\Ocal(n^3)$ in time and leads to no improvement in computational efficiency compared to standard kernel PCA. They also propose a method to centre the data in feature space, although this is done in order to make $\R^m$ into a subspace of $\Hcal$ and the centred representations are different from the principal scores derived below. The centring of the matrices $K_{nm}$ and $K_{mm}$ is the same as the one used here, but these matrices are applied differently.
% Removed (\emph{sic})

The above two methods represent the data points directly through their projections onto the truncated eigenspace of the Nyström subset and do not attempt to find the representations in the span of this subset that best describe the data, unlike the method presented in the current paper, which calculates the representations that have maximum variance for a given number of retained dimensions.

% TODO figure out that their eigenvectors are the correct ones, and exactly how they relate to our principal scores and components. 
In \cite{gisbrecht2015metric} the Nyström method was studied in the context of distance matrices and MDS, which is a related area to kernel PCA (also see \cite{de2004sparse, platt2005fastmap}). They calculated the eigenpairs of $\widetilde{K} = K_{nm}K^{-1}_{mm}K_{mn}$, from which the principal components, scores and values could be derived, although these specific quantities were not provided. They also applied the \emph{double centering} procedure, which is used in MDS to convert a distance matrix into a matrix of inner products. When applied to the Nyström kernel matrix $\widetilde{K}$ this procedure becomes similar to the centring used in the current paper, although it is not quite the same and it is applied for different purposes.

% TODO maybe say that we actually show that the variance is maximized etc... (if makes sense)

In the table below we summarize previous applications of the Nyström method to kernel PCA in terms of important properties when applying the Nyström method or developing variations of PCA.

\medmuskip=2mu
\thinmuskip=1mu
\thickmuskip=3mu

\renewcommand{\arraystretch}{1.5}
\begin{table}[h]
{\Small
\vspace{6pt}
    \caption{Comparison with previous methods of desirable properties of the application of the Nyström method to kernel PCA}
    \label{tab:datasets}
\begin{center}
\begin{tabular}{P{2cm}P{1.2cm}P{1.5cm}P{1.6cm}P{1.3cm}P{1.5cm}P{1.6cm}P{1.5cm}}
    \toprule
  \;\;\;\;\;\;\;\;\;  & Maximizes variance & Without zero-mean assumption & Full set of principal components & Principal scores & Scores for new data& Recovers kernel PCA when $n=m$ & Experimental evaluation  \\
    \midrule
  This paper \newline & \noindent\color{ForestGreen}\cmark   &  \noindent\color{ForestGreen}\cmark &  \noindent\color{ForestGreen}\cmark & \noindent\color{ForestGreen}\cmark & \noindent\color{ForestGreen}\cmark& \noindent\color{ForestGreen}\cmark & \noindent\color{ForestGreen}\cmark \\

Sterge et al (2021) & \noindent\color{red}\xmark  &  \noindent\color{ForestGreen}\cmark  &   \noindent\color{red}\xmark   & \noindent\color{red}\xmark & \noindent\color{red}\xmark& \noindent\color{red}\xmark & \noindent\color{red}\xmark \\

Sterge et al (2020) &      \noindent\color{ForestGreen}\cmark      & \noindent\color{red}\xmark &  \noindent\color{red}\xmark   & \noindent\color{red}\xmark & \noindent\color{red}\xmark& \noindent\color{ForestGreen}\cmark & \noindent\color{red}\xmark \\

Golts and Elad (2016) & \noindent\color{red}\xmark    & \noindent\color{red}\xmark  &  \noindent\color{red}\xmark    & \noindent\color{ForestGreen}\cmark  &  \noindent\color{ForestGreen}\cmark  &  \noindent\color{ForestGreen}\cmark  &  \noindent\color{ForestGreen}\cmark \\

Iosifidis and Gabbouj (2016) &   \noindent\color{red}\xmark        & \noindent\color{ForestGreen}\cmark    &  \noindent\color{red}\xmark    &  \noindent\color{ForestGreen}\cmark & \noindent\color{red}\xmark & \noindent\color{ForestGreen}\cmark & \noindent\color{ForestGreen}\cmark \\
    \bottomrule
\end{tabular}
\end{center}
\vspace{6pt}
}
\end{table}

% Revert to defaults
\medmuskip=4mu
\thinmuskip=3mu
\thickmuskip=5mu

\vspace{12pt}

There are few other methods suggested in the literature to make kernel PCA scalable. Stochastic optimization was applied to kernel PCA in \cite{zhang2016stochastic}, but under the assumption of zero-mean data in feature space. The zero-mean assumption was also employed in \cite{balcan2016communication}, who presented a distributed algorithm for kernel PCA. Random features have also been applied to kernel PCA, corresponding to applying linear PCA to the approximate covariance matrix obtained from the random features, which allows for data to have non-zero mean, but assumes a shift-invariant kernel function \citep{sriperumbudur2017approximate}. Consequently, to the best of our knowledge, other than the Nyström method there is no other procedure proposed in previous literature to make kernel PCA scalable, for arbitrary kernel functions or without the restrictive assumption of zero-mean data. %... which excludes many popular kernels, including the linear one... (?)

\vspace{16pt}

\section{Background} \label{sec:background}

\vspace{12pt}

We have a reproducing kernel Hilbert space $\Hcal$ (RKHS) of functions from a set $\Xcal$ to the real numbers. Associated with each RKHS is a symmetric positive definite kernel $k: \Xcal \times \Xcal \rightarrow \R$ with a reproducing property $\langle k(x, \cdot), f \rangle_{\Hcal} = f(x)$ for which the point evaluation $f \mapsto \langle k(x, \cdot), f \rangle_{\Hcal}$ is bounded. The kernel maps each element $x \in \Xcal$ to an element $\phi(x) := k(x, \cdot) \in \Hcal$. We assume throughout that $\Hcal$ is separable, which will be the case for example if $k$ is continuous and $\Xcal$ is compact \citep{paulsen2016introduction}. 

We have observations $\{x_i\}_{i=1}^n$ of an $\Xcal$-valued random variable $X: (\Omega, \Acal, \p) \rightarrow (\Xcal, \Acal_{\Xcal}, \p_X)$ where $\p_X(A) = \p(X^{-1}(A))$ \citep{cohn1980measure, graham2011simulation}. We obtain a random variable $Z = \phi(X) \in \Hcal$ with observations $z_i = \phi(x_i)$, assuming that $\phi$ is measurable, which is the case for example when $k$ is continuous. We assume $Z$ is absolutely continuous and that it has a continuous density and so all $z_i$ will be distinct. Its expectation in $\Hcal$ is given by $\E[Z] = \int Z d\p$ in the sense of Bochner. We assume $Z$ is in $L^2(\Omega, \Acal, \p; \Hcal)$ with norm $(\E[\|Z\|^2_{\Hcal}])^{1/2} = ( \int \|Z\|^2_{\Hcal}d\p )^{1/2}$ \citep{ledoux2013probability}.

Principal component analysis (PCA) of the zero-mean random variable $Z \in \Hcal$ constructs an optimal subspace $V_d \subset \Hcal$, of dimension $d$, such that the so-called reconstruction error
\[
	R(V) = \E \left[ \|P_VZ - Z\|_{\Hcal}^2 \right]
\]
is minimized, where $P_V: \Hcal \rightarrow \Hcal$ is the projection of (a realization of) $Z$ on a subspace $V$ \citep{besse1991approximation}. This is termed the \emph{true} reconstruction error \citep{blanchard2007statistical}. Since $Z$ is square-integrable the reconstruction error always exists and is finite.

In other words, the optimal $d$-dimensional subspace $V_d$ is given by
\[
    V_d = \argmin_{\dim(V)=d} \E \left[ \|P_V Z - Z \|^2_{\Hcal} \right]
\]
An estimate of the optimal subspace $V_d$ is obtained from the data $\{z_i\}_{i=1}^n$ by minimizing the \emph{empirical} reconstruction error
\[
    R_n(V) = \frac{1}{n}\sum_{i=1}^n \| P_{V} z_i - z_i \|^2_{\Hcal}
\]
which has a unique minimum since all eigenvalues are distinct \citep{blanchard2007statistical}. We denote the estimated subspace by $\hat{V}_d$. One may also consider the true reconstruction error with respect to the empirical subspace, given by
\[
    R(\hat{V}_d) = \E \left[ \|P_{\hat{V}_d}Z - Z \|_{\Hcal}^2 \right]
\]
where the expectation may be taken both with respect to $Z$ and $\hat{V}_d$, or treating the subspace as fixed; as well as the expected value of the empirical reconstruction error, given by

\[
    \E \left[ R_n(\hat{V}_d) \right] = \E \left[ \frac{1}{n}\sum_{i=1}^n \| P_{\hat{V}_d} z_i - z_i \|^2_{\Hcal} \right]
\]

When the random variable $Z$ does not have zero mean, the smallest reconstruction error is obtained from the centred random variable $Z' = Z - \E[Z]$
\[
  R(V_d) = \min_{\dim(V)=d} \E[\|P_V Z' - Z' \|_{\Hcal}^2]
\]
and similarly for the empirical reconstruction error replacing $z_i$ by $z_i' = z_i - \frac{1}{n}\sum_{\ell=1}^n z_{\ell}$.

Alternatively, instead of minimizing the reconstruction error of the centred random variable over $d$-dimensional subspaces $V$, one may minimize over affine subspaces with respect to the original random variable, and also optimize with respect to the term used for centring
\begin{align*}
   R(V_d) = \min_{\substack{a \in \Hcal  \\ \dim(V)=d}} \E[\|P_{a + V} Z - Z \|_{\Hcal}^2]
   =\min_{\substack{a \in \Hcal  \\ \dim(V)=d}} \E[\|P_V (Z - a) - (Z - a) \|_{\Hcal}^2]
\end{align*}
where $a$ is the translation of the vector space $V$, and whose optimal value is known to equal $\E[Z]$, and $P_{a + V}Z = a + P_V(Z - a)$ is the affine projection.

The \emph{covariance operator} is an element $C(u,v) \in \Htsr$ in the tensor product of bilinear functionals on $\Hcal$, given by $C(u,v) = \E [ Z \otimes Z ]$. The \emph{centred} covariance operator is given by
\[
    C'(u,v)
    = \E[ (Z - \E[Z]) \otimes (Z - \E[Z])]
    = \E[Z' \otimes Z']
\]
Identifying $\Htsr$ with the space $\mathrm{HS}(\Hcal)$ of Hilbert-Schmidt operators on $\Hcal$ by way of the mapping of elementary tensors $u \otimes v \mapsto \langle \cdot, u \rangle_{\Hcal} v$ we obtain $C' = \E[ \langle \, \cdot, Z'\rangle_{\Hcal} Z' ]$. When we refer to the covariance operator we may either refer to the tensor in $\Htsr$ or the operator in $\hs$.

A Hilbert-Schmidt operator $L$ is an operator on a Hilbert space $\Hcal$ with finite Hilbert-Schmidt norm, given by $\|L\|_{\hs} = \sum_j \| Le_j \|_{\Hcal}$ for any orthonormal basis $\{e_j\}_j$ in $\Hcal$ \citep[Chapter~5]{davies2007linear}. It is a Hilbert space, with inner product $\langle L_1, L_2 \rangle_{\hs} = \sum_j \langle L_1 e_j, L_2 e_j \rangle_{\Hcal}$. The Hilbert-Schmidt norm is always larger than or equal to the operator norm, $\|L\| \le \|L\|_{\hs}$, and if $\Hcal$ is finite it coincides with the Frobenius norm, $\|L\|_{\hs} = \|M\|_F$ where $M$ is a matrix representation of $L$ \citep{kreyszig1989introductory}. In the following, the use of a matrix representation for a Hilbert-Schmidt operator will often be implicit, and where applicable we pad the matrix representation with trailing zeros.

The covariance operator $C'$ is compact, since it is Hilbert-Schmidt, and so its spectrum is countable and all spectral values are eigenvalues apart from possibly $0$. Since $C'$ is infinite-dimensional, by assumption, the value $0$ is always a spectral value. Furthermore, the covariance operator is self-adjoint, and so the spectrum is real and the resolvent spectrum is empty. Finally, it is positive and so the spectrum is positive.

The sum of the smallest eigenvalues of the centred covariance operator $C'$ equals the minimum true reconstruction error of the centred random variable $Z' = Z - \E[Z]$. The eigenvectors form a countable orthonormal basis of $\mathrm{Im}(C')$, which can be extended to a countable orthonormal basis for the entire space, since $\Hcal$ is separable. Denoting the eigenvalues by $\{\lambda_j\}_{j=1}^{\infty}$ in decreasing order, the minimum reconstruction error can be written $R(V_d) = \sum_{j=d+1}^{\infty} \lambda_j$.

Replacing the measure $\p_Z$ on $\Hcal$ by the empirical measure $\p_n = \frac{1}{n}\sum_{i=1}^n \delta_{z_i}$, where $\delta_x$ is the Dirac delta function, we obtain the empirical covariance operator $C'_n: \Hcal \rightarrow \Hcal$
\[
    C'_n
    = \frac{1}{n} \sum_{i=1}^n \langle \, \cdot, z'_i \, \rangle_{\Hcal} z'_i
\]
We denote its eigenvalues by $\hat{\lambda}^n_1, \hat{\lambda}^n_2, ..., \hat{\lambda}^n_n$ in decreasing order and the corresponding eigenvectors by $\hat{\phi}^n_1, \hat{\phi}^n_2, ..., \hat{\phi}^n_n$. It has finite rank, and so the spectrum only contains eigenvalues, and may or may not include $0$. The minimum empirical reconstruction error is given by its smallest eigenvalues, $R_n(\hat{V}_d) = \sum_{j=d+1}^n \hat{\lambda}^n_j$, and it can be decomposed as $C'_n = \sum_{j=1}^n \hat{\lambda}^n_j \langle \, \cdot \, , \hat{\phi}^n_j \rangle_{\Hcal}\hat{\phi}^n_j$.

If $s: \Xcal \times \Xcal \rightarrow \R$ is square-integrable in the second variable, then the operator given by
\[
    T_s f = \int_{\Xcal} s(x, y) f(y) d\p_X(y)
\]
is an isometry of $L^2(\Xcal, \Acal_{\Xcal}, \p_X; \R)$ into the RKHS with kernel $k(x, y) = \int_{\Xcal} s(x, z) s(z, y) d\p_X(z)$ \citep{paulsen2016introduction}. One may also consider the integral operator
\[
    T_k f = \int_{\Xcal} k(x, y) f(y) d\p_X(y)
\]
 which is equal to $T_k = T^2_s$ and whose eigenvalues equal those of the covariance operator $C$ \citep{shawe2005eigenspectrum}.

If one replaces the probability measure $\p_X$ by its empirical equivalent $\p_n = \frac{1}{n}\sum_{i=1}^n \delta_{x_i}$with respect to the data points $\{x_i\}_{i=1}^n$ one again obtains an empirical operator $T_n$ % on $L^2(\Xcal, \Acal_{\Xcal}, \p_n; \R)$
\[
    T_n f = \int_{\Xcal} k(x, y) f(y) d\p_n(y)
    = \frac{1}{n} \sum_{i=1}^n k(x, x_i)f(x_i)
\]
The sampling operator $G_n$ is defined through $f \mapsto \frac{1}{\sqrt{n}}(f(x_1), f(x_2), ..., f(x_n))$ and it is an isometry of $L^2(\Xcal, \Acal_{\Xcal}, \p_X; \R)$ into $\R^n$ which identifies $T_n$ with $K$ \citep{koltchinskii2000random}. Its adjoint $G^*_n$ is given by $\alpha \mapsto \frac{1}{\sqrt{n}} \sum_{i=1}^n \alpha_i k(x_i, x)$ \citep{rudi2015less}. Furthermore, $C_n = G_n^*G^{}_n$ and $\frac{1}{n}K = G^{}_nG_n^*$.

And so the eigenvalues of the empirical kernel integral operator $T_n$ are the same as the eigenvalues of the kernel matrix, and its eigenvectors are given by \citep{bengio2004learning}
\[
    \hat{\psi}_j^n
    = \frac{\sqrt{n}}{\hat{\lambda}_j^n} \sum_{i=1}^n u_{j,i} k(x_i, x)
    = \frac{\sqrt{n}}{\hat{\lambda}_j^n} u_j^T \kappa(x)
\]
The values of $\hat{\psi}_j^n(x)$ at the points $x_1, x_2, ..., x_n$ equal the corresponding entries in the eigenvector of the kernel matrix $K$, $\hat{\psi}_j^n(x_i) = (u_j)_i$, where $u_j$ is the $j$th eigenvector of $K$.

If we randomly sample $m < n$ indices $S = \{r_1, r_2, ..., r_m\}$ and then take the values of $\hat{\psi}^m_r, \; r \in S$ at all the data points $x_1, x_2, ..., x_n$, and normalize by $\frac{1}{\sqrt{n}}$, we obtain the Nyström approximation \citep{williams2001using},
\begin{align} 
	\widetilde{\lambda}_j &= \frac{n}{m} \hat{\lambda}_j^m \label{eq:nysval} \\[1em] 
        \widetilde{u}_j &= \sqrt {\frac{m}{n}} \frac{1}{\hat{\lambda}_j^m} K_{nm} u_j \label{eq:nysvec}
\end{align}
where $\hat{\lambda}_j^m$ are the eigenvalues of $K_{mm}$, which contains the $m$ subsampled columns and rows of $K$ corresponding to the chosen indices, and $K_{nm}$ is the $m$ subsampled columns. Multiplying together the approximate eigenvalues (\ref{eq:nysval}) and eigenvectors (\ref{eq:nysvec}) one so obtains an approximate kernel matrix $\widetilde{K} = K_{nm} K^{-1}_{mm} K_{mn}$, where $K_{mn}$ is the transpose of $K_{nm}$. The approximate kernel matrix can serve as a replacement of the original kernel matrix for improved computational efficiency for different kernel methods.

Kernel methods in machine learning look for functions $f$ in the reproducing kernel Hilbert space to be adapted to data
\[
    f(x)
    = \sum_{i=1}^n \alpha_i \langle \phi(x_i), \phi(x) \rangle_{\Hcal}
    = \sum_{i=1}^n \alpha_i k(x_i, x)
\]
where $\{\alpha_i\} \in \R^n$ are parameters. The Nyström method may also be defined by restricting these functions to lie in the linear span of the $m$ subsampled data points $\{\phi(x_r)\}_{r \in S}$, while using the full dataset of $n$ points for estimation of the unknown parameters \citep{rudi2015less}. For fixed $S$ the linear span of $\{\phi(x_r)\}_{r \in S}$ is a closed subspace of $\Hcal$ and so is a Hilbert space, which we will denote by $\Hcal_S$ \citep{bollobas1999linear}. In other words, one looks for functions of the form
\[
    f(x) = \sum_{r \in S} \alpha_r \langle \phi(x_r), \phi(x) \rangle_{\Hcal} =
    \sum_{r \in S} \alpha_r k(x_r, x)
\]
that solve an estimation problem based on all data points $\{x_i\}_{i=1}^n$, such as an empirical risk minimization procedure.

After drawing the $n$ observations $\{x_i\}_{i=1}^n$ independently from $\p_X$, the subset of $m$ data points $\{x_r\}_{r \in S} = \{x_{r_1}, x_{r_2}, ..., x_{r_m}\}$ is randomly selected according to a specified distribution that may depend on the observed values $p(S | \{x_i\}_{i=1}^n)$. Before the data points are observed the elements in the subset are random variables $\{X_{r_1}, X_{r_2}, ..., X_{r_m}\}$. For notational convenience we will assume that the data points are reordered after the subsampling so that $\{x_r\}_{r \in S} = \{x_1, x_2, ..., x_m\}$.

Kernel PCA may be obtained by appealing to the $\ell^2(\R)$ representation of a separable real Hilbert space and arranging the data points in $\Hcal$ in a data matrix $\krn$ with one data point occupying a row, which may then have an infinite number of columns. With zero-mean data in feature space the principal components are then the eigenvectors of $\frac{1}{n}\krn^T\krn$ and the kernel matrix can be written as $K = \krn\krn^T$. The mean can be subtracted in the RKHS (the feature space) through \citep{scholkopf1998nonlinear}
\begin{equation*}
    K' = (\krn - \one_{n} \krn )(\krn - \one_n \krn)^T
       = K - \one_{n}K - K\one_{n} + \one_{n}K\one_{n}
\end{equation*}

where $\one_{n}$ is a matrix for which $(\one_{n})_{i,j} = \frac{1}{n}$. The eigenvalues of $K' = Q \Lambda Q^T$ scaled by $\frac{1}{n}$ then measure the variance of the data projected onto each individual principal component. Its eigenvectors $Q$ are proportional to the principal scores \--- the principal scores are given by $S = Q \Lambda^{1/2}$. By the singular value decomposition $\krn - \one_n \krn = Q \Sigma E^T$, where $\Lambda = \Sigma^2$, the principal scores of a \emph{new} data point $x^*$ which is centred in feature space is given by
\begin{align*}
    w^* &= ((\phi(x^*) - \mathbf{1}_n\krn)E)^T
         = ((\phi(x^*) - \mathbf{1}_n\krn) (\krn - \one_n \krn)^T Q \Lambda^{-1/2})^T \\[1em]
        &= ((\kappa(x^*)^T - \kappa(x^*)^T \one_n - \mathbf{1}_n K + \mathbf{1}_n K \one_n) Q \Lambda^{-1/2})^T \\[1em]
        &=  \Lambda^{-1/2} Q^T (\kappa(x^*) - \one_n \kappa(x^*) - K \mathbf{1}_n  +  \one_n K \mathbf{1}_n)
        =:  \Lambda^{-1/2} Q^T \kappa'(x^*)
        =  S^{-1} \kappa'(x^*)
\end{align*}
where $\phi(x_i)$ is an element in $\ell^2(\R)$ as a row vector, $\mathbf{1}_n$ is a length-$n$ column vector with each element equal to $\frac{1}{n}$ and $\kappa(x) = (k(x_1, x), k(x_2, x), ..., k(x_n, x))^T$.

Using this formula to calculate the scores for the \emph{original} data points we get that $\kappa'(x^*)$ becomes $K'$ and obtain $w^{*T} = K' Q \Lambda^{-1/2} = Q \Lambda Q^T Q \Lambda^{-1/2} = Q \Lambda^{1/2}$ and so as expected we recover the previous expression for the principal scores.

When applying PCA to a real-world problem it is often appropriate to normalize the input variables to have variance 1, so as to make the analysis independent of arbitrary changes of units in the data. Otherwise the variables with higher variance will also dominate the principal components and comparisons between variables become difficult. This normalization will often also be appropriate for kernel PCA and we do this for the experimental analysis (Section \ref{sec:exp}). The centring of variables in the feature space does not guarantee that the input variables become centred.

Multi-dimensional scaling (MDS) finds a lower-dimensional representation of data from a matrix of distances between data points \citep{hout2013multidimensional}. MDS is equivalent to kernel PCA when the kernel is \emph{isotropic}, i.e. on the form $f(\|x - y \|)$ for some function $f$ \citep{williams2002connection}. Therefore, theoretical or practical results for kernel PCA are often also applicable to MDS.

The approximate eigenvalues and eigenvectors from the original Nyström method in Equations (\ref{eq:nysval}) and (\ref{eq:nysvec}) may be used to define an approximate kernel PCA. However, these approximate eigenvectors are not orthogonal and do not yield uncorrelated principal scores, so they do not define true PCA, and the eigenvalues do not describe the variance captured by the principal components, since they are simply the eigenvalues of $K_{mm}$ scaled by a factor $\frac{n}{m}$. There is a need for another way to derive kernel PCA with the Nyström method.

\vspace{32pt}

\section{Kernel PCA with the Nyström method} \label{sec:kpca}

\vspace{12pt}

In this section we present kernel PCA with the Nyström method, which provides an efficient and flexible technique for non-linear PCA. We present the corresponding quantities that are defined for linear PCA and are useful for data exploration and application of the method in downstream tasks

\begin{enumerate}[(1)]
    \item a set of orthogonal principal components with unit length in the linear span of the subsampled data points in $\Hcal$ (denoted $\Hcal_S$),
    \item the variance of the data along each of these directions, termed the explained variance,
    \item the reconstruction error of the data onto the principal components,
    \item a set of uncorrelated principal scores with the weightings of the data points on the principal components, and,
    \item the principal scores of a new data point with respect to the existing principal components
\end{enumerate}

For standard kernel PCA (2) and (3) are the same, but with the Nyström method they are different, since the principal components will not span the entire data.

We first present the principal components, explained variance and scores for a dataset in the following theorem

%\vspace{12pt}

\begin{theorem}[Nyström kernel PCA] \label{thm:nystrompca}

Let $(\widetilde{\lambda}_j, v_j)$ be the eigenpairs and $V\widetilde{\Lambda}V^T$ be the eigendecomposition of
\begin{equation*}
    \widetilde{K}' = \frac{1}{n}K_{mm}^{\prime \,-1/2} K'_{mn} K'_{nm} K_{mm}^{\prime \, -1/2}
\end{equation*}
where
\begin{align*}
    K'_{mn} &= 
    K_{mn} - K_{mn}\one_n 
    - \one_n^{m, n} \widetilde{K}
    + \one_n^{m, n} \widetilde{K} \one_n \\[0.5em]
    K'_{mm} &=
    K_{mm} - \one_n^{m, n} K_{nm}
    - K_{mn} \one_n^{n, m}
    + \one_n^{m, n} \widetilde{K} \one_n^{m, n}
\end{align*}
with $\widetilde{K} = K_{nm}K_{mm}^{-1}K_{mn}$ and where $\one_n$, $\one_n^{n, m}$ and $\one_n^{m, n}$ are $n \times n$, $n \times m$ and $m \times n$ matrices respectively with each element equal to $\frac{1}{n}$.

The perpendicular intersecting lines $\phi_0 + \langle\widetilde{\phi}_j \rangle, \, j = 1, 2, ..., m$ in $\Hcal_S$ along which the variance of the data is successively maximized, where the orthonormal vectors $\{ \widetilde{\phi}_j\}_{j=1}^m$ are termed the principal components, are given by
\begin{align*}
    \phi_0 &= \frac{1}{n}K_{nm}K_{mm}^{-1}\kappa_m(x) \\[0.5em]
    \widetilde{\phi}_j &= \sum_{k=1}^m u_{j,k}
        \left( k(x_k, x) - \phi_0 \right)
\end{align*}
and the variances along these directions are $\{ \widetilde{\lambda}_j \}_{j=1}^m$, termed the principal values or explained variance, where $\kappa_m(x) = (k(x_1, x), k(x_2, x), ..., k(x_m, x))^T$, $u_j = K_{mm}^{\prime \, -1/2} v_j$ and $U = K_{mm}^{\prime \, -1/2} V$.

The projection coefficients of the centred data points onto the principal components, termed the principal scores, are given by
\[
    W = K'_{nm}U = K'_{nm} K_{mm}^{\prime \, -1/2} V
\]
where each row of $W$ contains the scores of one data point onto the principal components. The principal scores of a new data point $x^*$ is given by
\[
    w^* = U^T (\kappa_m(x^*) - K_{mn} \mathbf{1}_n
                 - \one_n^{m, n}K_{nm}K^{-1}_{mm} \kappa_m(x^*)
                 + \one_n^{m, n} \widetilde{K} \mathbf{1}_n)
     = U^T \, \widetilde{\kappa}(x^*)
\]
where $\mathbf{1}_n$ is a length-$n$ column vector given by $\mathbf{1}_n = (\frac{1}{n}, \frac{1}{n}, \; ..., \; \frac{1}{n})^T$.

\end{theorem}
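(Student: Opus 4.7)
The strategy is to reduce Nyström kernel PCA to a finite-dimensional affine PCA performed in an orthonormal coordinate system of the $m$-dimensional subspace $\Hcal_S = \mathrm{span}\{\phi(x_k)\}_{k=1}^{m}$, and then translate back to $\Hcal$. For any $\phi_0 \in \Hcal_S$ and linear subspace $V \subset \Hcal_S$, writing each $z_i = \phi(x_i)$ as $z_i = \widetilde{z}_i + z_i^{\perp}$ with $\widetilde{z}_i := P_{\Hcal_S}(z_i)$ and applying Pythagoras gives
\[
    \| P_{\phi_0 + V}(z_i) - z_i \|_{\Hcal}^{2}
    = \| z_i^{\perp} \|^{2} + \| P_{V}(\widetilde{z}_i - \phi_0) - (\widetilde{z}_i - \phi_0) \|^{2}.
\]
The first term is independent of $(\phi_0, V)$, so minimising the empirical reconstruction error reduces to affine PCA on $\{\widetilde{z}_i\}_{i=1}^{n}$ inside $\Hcal_S$. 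A standard argument identifies the optimal translation as the empirical mean $\tfrac{1}{n}\sum_i \widetilde{z}_i = P_{\Hcal_S}(\bar{z})$; solving the normal equations $K_{mm}\beta = K_{mn}\mathbf{1}_n$ for the coefficients of this projection in $\{\phi(x_k)\}_{k=1}^{m}$ yields exactly the $\phi_0$ from the statement.

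I would next build an orthonormal basis of $\Hcal_S$ adapted to this centring. Setting $\phi_k' := \phi(x_k) - \phi_0$ and expanding $\langle \phi_i', \phi_j' \rangle_{\Hcal}$ into four terms, the cross-products $\langle \phi(x_i), \phi_0 \rangle$ and $\langle \phi_0, \phi(x_j) \rangle$ and the norm $\|\phi_0\|^{2} = \mathbf{1}_n^{T}\widetilde{K}\mathbf{1}_n$ each rewrite as the appropriate row-mean or global-mean of the Nyström matrix $\widetilde{K} = K_{nm}K_{mm}^{-1}K_{mn}$, producing exactly the matrix $K'_{mm}$ from the theorem. When this matrix is invertible, $\widetilde{b}_j := \sum_k (K_{mm}'^{\,-1/2})_{jk} \phi_k'$ is an orthonormal basis of $\Hcal_S$. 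Computing $\langle z_{\ell} - \phi_0, \phi_k' \rangle$ by the same expansion yields $K'_{nm}$, so the $n \times m$ coordinate matrix of the centred projected data in the basis $\{\widetilde{b}_j\}$ is $A = K'_{nm} K_{mm}'^{\,-1/2}$.

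Because $\phi_0$ is the empirical mean of the $\widetilde{z}_i$, the matrix $A$ has zero column-mean, and ordinary PCA in $\R^{m}$ reduces to the eigendecomposition of $\tfrac{1}{n}A^{T}A = \widetilde{K}'$. The orthonormal eigenvectors $v_j$ give the principal directions in coordinates with explained variances $\widetilde{\lambda}_j$; transforming back under the isometric change of basis preserves orthonormality and yields $\widetilde{\phi}_j = \sum_i v_{ji}\widetilde{b}_i = \sum_k u_{j,k}\phi_k'$ with $u_j = K_{mm}'^{\,-1/2}v_j$, together with the principal scores $W = AV = K'_{nm}K_{mm}'^{\,-1/2}V$. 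For a new point $x^{*}$, the score along $\widetilde{\phi}_j$ is $\langle \phi(x^{*}) - \phi_0, \widetilde{\phi}_j \rangle = \sum_k u_{j,k}\langle \phi(x^{*}) - \phi_0, \phi_k' \rangle$; expanding the four inner-product terms exactly as above and using $\phi_0(x^{*}) = \mathbf{1}_n^{T}K_{nm}K_{mm}^{-1}\kappa_m(x^{*})$ assembles into the vector $\widetilde{\kappa}(x^{*})$, giving $w^{*} = U^{T}\widetilde{\kappa}(x^{*})$.

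The main technical difficulty is essentially pure bookkeeping: one must match each elementary pointwise inner-product computation against the compact matrix expressions involving $\one_n$, $\one_n^{n,m}$, $\one_n^{m,n}$, and $\widetilde{K}$ in the theorem, each correction term corresponding to a specific pairing (data-with-$\phi_0$, basis-with-$\phi_0$, or $\phi_0$-with-itself) that must be written as the correctly-sized outer-product matrix built from the $1/n$ fill-matrices. A secondary subtlety is that the centred Nyström vectors $\{\phi_k'\}$ may be linearly dependent in degenerate cases, so $K_{mm}'^{\,-1/2}$ should be understood via the Moore--Penrose pseudoinverse with the analysis restricted to the range of $K'_{mm}$.
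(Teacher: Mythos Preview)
Your proposal is correct and follows essentially the same route as the paper's proof. The paper writes a generic direction $f_j=\sum_k u_{j,k}(\phi(x_k)-\phi_0)$, computes the variance as $\tfrac{1}{n}u_j^T K'_{mn}K'_{nm}u_j$ and the orthonormality constraint as $u_j^T K'_{mm}u_p=\delta_{jp}$, and then applies the substitution $v_j=K_{mm}^{\prime\,1/2}u_j$ together with Courant--Fischer--Weyl; this whitening step is exactly your construction of the orthonormal basis $\widetilde b_j$, and the inner-product bookkeeping that produces $K'_{mm}$, $K'_{nm}$, and $\widetilde\kappa(x^*)$ is identical (the paper centres by the full mean $\bar z$ rather than $\phi_0$, but since $\bar z-\phi_0\in\Hcal_S^\perp$ the inner products with $\phi_k'$ coincide).
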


The principal components can be seen as defining new variables through linear combinations of the existing variables that have successively maximized variance and that are uncorrelated. The values of these new variables are given by the principal scores, which represent the data in a new coordinate system defined by the principal components as a new basis for the space. As such, the principal scores can be used as a drop-in replacement for the original data in arbitrary supervised or unsupervised learning methods, including after removing the scores corresponding to principal components with smaller eigenvalues. Please see Section \ref{sec:appl} for an example of this.

To see that these new variables are uncorrelated also with the Nyström method we note that
\[
    W^TW
    = V^T K_{mm}^{\prime \, -1/2}K'_{mn} K'_{nm} K_{mm}^{\prime \, -1/2} V
    = n V^T V \widetilde{\Lambda} V^T V = n \widetilde{\Lambda}
\]
which is a diagonal matrix.

The scores of new data points are important when measuring the accuracy of PCA with a test set of hold-out data points, for example using the reconstruction error (Section \ref{sec:exp}), or when applying PCA as a preprocessing step for supervised learning methods and one wishes to create predictions for new data points, such as in principal component regression (Section \ref{sec:appl}).

The computational complexity of the method is $\Ocal(nm^2)$ in time, which is the same as the Nyström method applied to regression. Centring of the matrix $K_{mn}$ can be accomplished in $\Ocal(m^3 + nm)$ operations, and so the centring in the proposed method adds no additional time requirements to the dominant $\Ocal(nm^2)$ factor. We refer to the software implementation for full details\footnote{{\fontsize{7.9pt}{7.9pt}\selectfont \url{https://github.com/fredhallgren/nystrompca/blob/main/nystrompca/algorithms/nystrom_KPCA.py}}}.

The Nyström method approximates the corresponding full method, so when $m = n$ we should recover standard kernel PCA. In this case $\widetilde{K} = K K^{-1} K = K$ and as expected $K'_{mm} = K'_{nm} = K'$ and 
\[
    K_{mm}^{\prime \, -1/2} K'_{mn} K'_{nm} K_{mm}^{\prime \, -1/2} = K'
\]

and the scores are equal to $W = K^{\prime \, 1/2}V = \sqrt{n} \, V \widetilde{\Lambda}^{1/2} V^T V = \sqrt{n} \, V \widetilde{\Lambda}^{1/2} = Q\Lambda^{1/2}$, which we know to be the scores for standard kernel PCA.

The smallest $m-d$ Nyström eigenvalues $\sum_{j=d+1}^m \widetilde{\lambda}_j$ measure the residual variance of the data points \emph{within} $\Hcal_S$ and correspond to the reconstruction error $\frac{1}{n} \sum_{i=1}^n \|P_{\widetilde{V}_d} z'_i - P_{\Hcal_S}z'_i \|_{\Hcal}^2$, where $\widetilde{V}_d = \mathrm{span}\{\{\widetilde{\phi}_k\}_{k=1}^d\}$. The full reconstruction error with respect to the top $d$ Nyström principal components is given by
\begin{equation} \label{eq:fullrec}
    R_n(\widetilde{V}_d) = \frac{1}{n}\sum_{i=1}^n \|z'_i - P_{\widetilde{V}_d}z'_i \|^2_{\Hcal} = \frac{1}{n}\tr(K') - \sum_{j=1}^d \widetilde{\lambda}_j
\end{equation}
where $\tr(\cdot)$ is the trace and $\frac{1}{n}\tr(K')$ is the variance of the full dataset in $\Hcal$. From Theorem~\ref{thm:nystrompca} above we know that this is the smallest reconstruction error among all $d$-dimensional subspaces in $\Hcal_S$.

Calculation of this quantity is $\Ocal(n^2)$ due to the centring of $K$. However, it can be approximated for example by subtracting the mean of $K_{nm}$ instead of the mean of $K$, which becomes $\Ocal(nm)$. This is included as an option in the software package accompanying the paper. Please see Section \ref{sec:exp} on page~\pageref{sec:exp} for further details.
%\footnote{\url{https://github.com/fredhallgren/nystrompca}}.

Note that the reconstruction error above in Equation~(\ref{eq:fullrec}) is slightly different from the reconstruction error of the uncentred data points with respect to the affine subspace $\phi_0 + \widetilde{V}_d$, which becomes $\frac{1}{n}\sum_{i=1}^n \|(z_i - \phi_0) - P_{\widetilde{V}_d}(z_i - \phi_0)\|_{\Hcal}^2 = \frac{1}{n}\sum_{i=1}^n \|(z_i - \phi_0) - P_{\widetilde{V}_d}z'_i\|_{\Hcal}^2$. Both reconstruction errors are at a minimum for the proposed method.

Another quantity of interest for purposes of comparison is the reconstruction error of the full dataset on the eigenspace of the subset of $m$ data points. Creating PCA from a random subset of $m$ data points to describe the full dataset will be termed \emph{Subset PCA}. We use the same subspace translation as for the Nyström method \--- that is to say we centre the data using the mean of the $n$ data points projected onto $\Hcal_S$. This ensures that the amount of variance captured is the same whether we project the centred data onto the principal components, or the uncentred data onto the lines translated from the origin. The principal components will then be given by, for $j = 1, 2, ..., m$
\[
    \hat{\phi}^{m,n}_j =  \sum_{k=1}^m u^m_{j,k} (k(x_k, x) - \phi_0)
\]
where $u^m_j$ is the $j$th eigenvector of $\frac{1}{m}K'_{mm}$. The variance of the full data captured by these principal components and the associated reconstruction error are presented in the following theorem

\vspace{12pt}

\begin{theorem}[Subset PCA] \label{thm:subsetpca}
The variance of the dataset $\{ \phi(x_i) \}_{i=1}^n$ along the $j$th principal component $\hat{\phi}^{m,n}_j$ is given by
\[
    \hat{\lambda}_j^{m,n}
    = \frac{1}{n}\sum_{i=1}^n \|P_{\hat{\phi}_j^{m,n}}z'_i\|_{\Hcal}^2
    = \frac{1}{n \cdot m \hat{\lambda}_j^m} u_j^{m \, T} K'_{mn}K'_{nm}u_j^m
\]
where $(\hat{\lambda}^m_j, u^m_j)$ is the $j$th eigenpair of $\frac{1}{m}K'_{mm}$.

The reconstruction error of the full dataset onto the corresponding $d$-dimensional PCA subspace is
\[
    R_n(\hat{V}_d^m)
    = \frac{1}{n}\sum_{i=1}^n \|z'_i - P_{\hat{V}_d^m}z'_i\|_{\Hcal}^2
    = \frac{1}{n} \tr(K')
    - \frac{1}{n \cdot m} \tr(K'_{nm}U^m_d \Lambda_d^{m \,-1} U^{m \, T}_d K'_{mn})
\]
where $U^m_d \Lambda_d^m U^{m \, T}_d$ is the truncated eigendecomposition of $\frac{1}{m}K'_{mm}$.

\end{theorem}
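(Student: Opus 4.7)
The plan is to reduce every projection computation to a matrix-algebraic expression in $K'_{mm}$ and $K'_{nm}$, by exploiting two structural facts: that the vectors $\hat{\phi}^{m,n}_j$ are pairwise orthogonal with explicitly computable norms, and that the centring element $\phi_0$ is the orthogonal projection of $\bar{\phi} = \frac{1}{n}\sum_i \phi(x_i)$ onto $\Hcal_S$.

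First I would establish the two Gram-matrix identities $[K'_{mm}]_{k,l} = \langle \phi(x_k) - \phi_0, \phi(x_l) - \phi_0\rangle_\Hcal$ and $[K'_{nm}]_{i,k} = \langle \phi(x_i) - \phi_0, \phi(x_k) - \phi_0\rangle_\Hcal$ by expanding each inner product and matching term-by-term with the matrix formulas in Theorem~\ref{thm:nystrompca}, using $\phi_0(x_i) = [\widetilde{K}\mathbf{1}_n]_i$ for any original data point together with $\|\phi_0\|_\Hcal^2 = \mathbf{1}_n^T\widetilde{K}\mathbf{1}_n$. A direct consequence is that $\langle \hat{\phi}^{m,n}_j, \hat{\phi}^{m,n}_l\rangle_\Hcal = u^{m\,T}_j K'_{mm} u^m_l = m\hat{\lambda}^m_j \delta_{jl}$, so the principal components defined in the theorem are orthogonal with squared norms $m\hat{\lambda}^m_j$ — not unit-norm — and the correct single-component projection formula is $P_{\hat{\phi}^{m,n}_j} z = \langle z, \hat{\phi}^{m,n}_j\rangle_\Hcal \, \hat{\phi}^{m,n}_j / (m\hat{\lambda}^m_j)$.

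Next I would translate $\langle z'_i, \hat{\phi}^{m,n}_j\rangle_\Hcal$ into matrix form. Writing $z'_i = (\phi(x_i) - \phi_0) + (\phi_0 - \bar{\phi})$ and observing that $\hat{\phi}^{m,n}_j \in \Hcal_S$ while $\bar{\phi} - \phi_0 \perp \Hcal_S$ (since $\phi_0 = P_{\Hcal_S}\bar{\phi}$), the second summand drops out, so $\langle z'_i, \hat{\phi}^{m,n}_j\rangle_\Hcal = [K'_{nm} u^m_j]_i$. Squaring and dividing by $m\hat{\lambda}^m_j$ gives the per-point projected norm $\|P_{\hat{\phi}^{m,n}_j}z'_i\|_\Hcal^2 = ([K'_{nm} u^m_j]_i)^2/(m\hat{\lambda}^m_j)$; summing over $i$ recognises the resulting sum of squares as $u^{m\,T}_j K'_{mn} K'_{nm} u^m_j$, which on division by $n$ gives the explained-variance formula.

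For the reconstruction error I would invoke the Pythagorean identity $\|z'_i - P_{\hat{V}^m_d}z'_i\|_\Hcal^2 = \|z'_i\|_\Hcal^2 - \|P_{\hat{V}^m_d}z'_i\|_\Hcal^2$, with orthogonality of the $\hat{\phi}^{m,n}_j$ decomposing $P_{\hat{V}^m_d}$ into a sum of the rank-one projections already computed. Then $\sum_i \|z'_i\|^2_\Hcal = \tr(K')$ follows from $\langle z'_i, z'_i\rangle_\Hcal = [K']_{i,i}$, and by the cyclic property of trace the projected sum assembles as $\frac{1}{m}\tr(K'_{nm} U^m_d \Lambda^{m\,-1}_d U^{m\,T}_d K'_{mn})$. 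The main delicacy is the Gram-matrix bookkeeping that identifies $K'_{nm}$ and $K'_{mm}$ as the inner-product matrices of the $\phi_0$-centred data — in particular noting that $K'_{mm}$ centres by the \emph{global} mean projected into $\Hcal_S$ rather than by the empirical mean of the subsample — together with the substitution $\bar{\phi} \mapsto \phi_0$ inside inner products against $\Hcal_S$; once these are secured the rest is routine linear algebra.
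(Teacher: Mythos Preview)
Your proposal is correct and follows essentially the same route as the paper: identify $\langle z'_i,\hat{\phi}^{m,n}_j\rangle_{\Hcal}$ with $[K'_{nm}u^m_j]_i$, use $\|\hat{\phi}^{m,n}_j\|_{\Hcal}^2=m\hat{\lambda}^m_j$, and finish with the Pythagorean split $\|z'_i\|^2-\|P_{\hat V_d^m}z'_i\|^2$. The only difference is presentational: the paper obtains the entries of $K'_{mn}$ by expanding $\langle \phi(x_i)-\bar\phi,\,\phi(x_k)-\phi_0\rangle_{\Hcal}$ term by term into kernel values, whereas you first record the geometric fact $\phi_0=P_{\Hcal_S}\bar\phi$ so that $\bar\phi-\phi_0\perp\Hcal_S$, which lets you swap $\bar\phi$ for $\phi_0$ inside any inner product against $\Hcal_S$ and read off the Gram-matrix identities $K'_{mm},K'_{nm}$ at once; this is a tidy shortcut but not a different argument.
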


\vspace{12pt}

As expected, if $n = m = d$ then the reconstruction error is zero.

The method proposed in this section for efficient kernel PCA can also be applied to improve the scalability of MDS when these two methods are equivalent, as outlined in Section~\ref{sec:background}.

\vspace{16pt}
\newpage
\section{Prelude: A special case} \label{sec:prelude}

\vspace{12pt}

Before studying the statistical accuracy of kernel PCA with Nyström method through a confidence bound we present a majorization relation between Nyström and Subset PCA and consider the special case when the PCA dimension equals the number of subsampled data points, $d = m$. In this case the reconstruction error for the Nyström method is the same as Subset PCA, both for the empirical and true reconstruction errors.

We first present the majorization relation in the following proposition. It tells us with one concise formula that for any PCA dimension strictly less than the number of data points in the subset, kernel PCA with the Nyström method will always capture at least as much variance as PCA created directly from the subset, but that there will be no improvement when $d=m$.

\begin{proposition} \label{prop:maj}
We have the following majorization relation for the empirical error
\[
    (\widetilde{\lambda}_1, \widetilde{\lambda}_2, ..., \widetilde{\lambda}_m)
    \; \succ \;
    (\hat{\lambda}^{m,n}_1, \hat{\lambda}^{m,n}_2, ..., \hat{\lambda}^{m,n}_m)
\]

\end{proposition}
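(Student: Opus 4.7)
The plan is to verify the two defining conditions of the majorization $(\widetilde{\lambda}_j)_j \succ (\hat{\lambda}^{m,n}_j)_j$: partial-sum dominance $\sum_{j=1}^d \widetilde{\lambda}_j \ge \sum_{j=1}^d \hat{\lambda}^{m,n}_j$ for each $d \in \{1,\ldots,m-1\}$, and equality at $d = m$. Both sequences describe variance captured along orthogonal directions inside the same ambient centred subspace of $\Hcal_S$, so the inequality should reduce to an extremality argument (of Ky Fan type) that exploits the variance-maximizing characterization of Nyström PCA from Theorem~\ref{thm:nystrompca}, while the boundary equality should fall out of a direct trace identity.

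For the total-sum equality I would compute, by cyclicity of the trace,
\[
    \sum_{j=1}^m \widetilde{\lambda}_j
    = \tr(\widetilde{K}')
    = \frac{1}{n}\tr\!\bigl(K_{mm}^{\prime\,-1/2} K'_{mn} K'_{nm} K_{mm}^{\prime\,-1/2}\bigr)
    = \frac{1}{n}\tr\!\bigl(K_{mm}^{\prime\,-1} K'_{mn} K'_{nm}\bigr),
\]
and, using the spectral identity $\sum_j u^m_j u^{m\,T}_j / \hat{\lambda}^m_j = (\tfrac{1}{m}K'_{mm})^{-1} = m\,K_{mm}^{\prime\,-1}$,
\[
    \sum_{j=1}^m \hat{\lambda}^{m,n}_j
    = \frac{1}{nm} \sum_{j=1}^m \frac{1}{\hat{\lambda}^m_j} \tr\!\bigl(K'_{mn} K'_{nm}\, u^m_j u^{m\,T}_j\bigr)
    = \frac{1}{n}\tr\!\bigl(K_{mm}^{\prime\,-1} K'_{mn} K'_{nm}\bigr),
\]
which matches.

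For each $d < m$ I would then invoke Theorem~\ref{thm:nystrompca} together with Equation~(\ref{eq:fullrec}): the Nyström subspace $\widetilde{V}_d$ attains the maximum of the captured variance $\frac{1}{n}\sum_i \|P_V z'_i\|^2_{\Hcal}$ over all $d$-dimensional subspaces $V \subset \Hcal_S$, and this maximum equals $\sum_{j=1}^d \widetilde{\lambda}_j$. The top $d$ Subset PCA directions span a particular $d$-dimensional subspace $\hat{V}^m_d \subset \Hcal_S$, on which the variance captured is identified by Theorem~\ref{thm:subsetpca} as $\sum_{j=1}^d \hat{\lambda}^{m,n}_j$. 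Partial-sum dominance then follows immediately from the extremal property of $\widetilde{V}_d$.

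The main obstacle I anticipate is the RKHS bookkeeping needed to make the extremality argument precise. Because $K'_{mm}$ centres via the global $n$-mean rather than the subset $m$-mean, one needs $\phi_0$ to be the orthogonal projection of $\bar{\phi}$ onto $\Hcal_S$, giving $\langle k(x_k,\cdot) - \phi_0,\, k(x_l,\cdot) - \phi_0\rangle_{\Hcal} = (K'_{mm})_{kl}$; this in turn ensures that the Subset PCA directions $\{\hat{\phi}^{m,n}_j\}_j$ are pairwise orthogonal in $\Hcal$ with squared norms $m\,\hat{\lambda}^m_j$, so that, after rescaling to unit vectors, they genuinely furnish an orthonormal frame for a $d$-dimensional subspace of $\Hcal_S$ against which the Ky Fan inequality can be applied. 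Once these identifications are in place the argument closes in one line.
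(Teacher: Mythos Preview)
Your proposal is correct and follows essentially the same route as the paper: partial-sum dominance comes from the variational characterization in Theorem~\ref{thm:nystrompca} together with $\hat{V}^m_d \subset \Hcal_S$, and the RKHS bookkeeping you flag is precisely what that theorem already packages. The only cosmetic difference is at $d=m$: the paper simply observes that both families span the same space and hence capture the full variance of the data in $\Hcal_S$, whereas you supply the explicit trace identity $\frac{1}{n}\tr(K_{mm}^{\prime\,-1}K'_{mn}K'_{nm})$ on both sides; either argument closes the case.
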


The majorization is strict in the sense that $\widetilde{\lambda}_{<d} > \hat{\lambda}_{<d}^{m,n}$ for $d < m$, by the assumption of a continuous data distribution.

A direct consequence of the proposition is that
\[
    R_n(\widetilde{V}_m) = R_n(\hat{V}_m^m)
\]

For the true reconstruction error we consider the case where the sampling of the Nyström subset occurs independently of the values of the data points

\begin{proposition} \label{prop:trueeq}
Let $d=m$ and let the Nyström subset be sampled according to $p(S \, | \, x_1, x_2, ..., x_n)$. Then if
\[
    p(S \, | \, x_1, x_2, ..., x_n) = p(S)
\]

i.e. the subsampling is independent of the data, we have
\[
    R(\widetilde{V}_m) = R(\hat{V}_m^m)
\]
\end{proposition}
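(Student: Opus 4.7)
The plan is to reduce the equality of the true reconstruction errors to an equality of the underlying subspaces, namely $\widetilde{V}_m = \hat{V}_m^m$ in $\Hcal$. Once the subspaces coincide, the orthogonal projectors agree pointwise, and the equality of expectations follows immediately.

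First I would unpack the definitions from Theorem~\ref{thm:nystrompca} and the Subset PCA construction. Each Nyström principal component satisfies $\widetilde{\phi}_j = \sum_{k=1}^m u_{j,k}(k(x_k, \cdot) - \phi_0)$, so it lies in the centred subspace $W := \mathrm{span}\{k(x_k, \cdot) - \phi_0 : k = 1, \ldots, m\}$. The Subset PCA construction uses the \emph{same} affine offset $\phi_0$, and each of its principal components $\hat{\phi}^{m,n}_j = \sum_{k=1}^m u^m_{j,k}(k(x_k, \cdot) - \phi_0)$ also lies in $W$. When $d = m$, the coefficient vectors $\{u_j\}_{j=1}^m$ and $\{u^m_j\}_{j=1}^m$ each form a basis of $\R^m$: the former because $u_j = K_{mm}^{\prime -1/2} v_j$ is the composition of two invertible linear maps (the matrix square root and the eigenbasis of $\widetilde{K}'$), and the latter because the eigenvectors of the full-rank matrix $\frac{1}{m}K'_{mm}$ form an orthonormal basis of $\R^m$. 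Hence both families of $m$ vectors span all of $W$, giving $\widetilde{V}_m = W = \hat{V}_m^m$ almost surely.

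Given $\widetilde{V}_m = \hat{V}_m^m$ as linear subspaces of $\Hcal$, the orthogonal projectors $P_{\widetilde{V}_m}$ and $P_{\hat{V}_m^m}$ coincide, so for every realisation of the training sample, the subset $S$, and a test point $Z$ drawn independently of the training data, one has $\|P_{\widetilde{V}_m} Z' - Z'\|_{\Hcal}^2 = \|P_{\hat{V}_m^m} Z' - Z'\|_{\Hcal}^2$. The assumption $p(S \mid x_1, \ldots, x_n) = p(S)$ ensures that $Z$ is independent not only of the training data but also of the subset $S$, so taking expectations carries this pointwise identity into $R(\widetilde{V}_m) = R(\hat{V}_m^m)$ without any dependence-induced reweighting. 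The only real obstacle is bookkeeping: one must verify that $K'_{mm}$ has full rank so that $K_{mm}^{\prime -1/2}$ is well defined and $W$ is genuinely $m$-dimensional, which holds almost surely under the standing assumption of a continuous data density and a strictly positive definite kernel.
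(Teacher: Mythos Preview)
Your argument is correct and is more direct than the paper's. Both proofs begin from the same key observation --- that for $d=m$ the Nystr\"om subspace $\widetilde{V}_m$ and the Subset PCA subspace $\hat{V}_m^m$ coincide pointwise --- but they diverge from there. You establish the coincidence explicitly by showing that both families of $m$ principal components span the common centred space $W=\mathrm{span}\{k(x_k,\cdot)-\phi_0:1\le k\le m\}$ (the paper instead cites Proposition~\ref{prop:maj}), and then you pass immediately from $\widetilde{V}_m=\hat{V}_m^m$ to equality of projectors and hence of $R(\cdot)$. The paper, by contrast, after noting the pointwise equality, inserts a distributional step: it uses the hypothesis $p(S\mid x_1,\ldots,x_n)=p(S)$ to argue that the subsampled points $(X_{i_1},\ldots,X_{i_m})$ are themselves i.i.d.\ from $\p_X$, hence equal in law to $(X_1,\ldots,X_m)$, and only then concludes $\widetilde{V}_m\overset{d}{=}\hat{V}_m^m$ and equality of the expected losses.

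One remark on how you invoke the independence assumption: your sentence ``$p(S\mid x_1,\ldots,x_n)=p(S)$ ensures that $Z$ is independent not only of the training data but also of $S$'' is not the operative mechanism --- a fresh test point $Z$ is independent of the whole training procedure regardless of how $S$ is drawn, and once you have the \emph{pointwise} identity $\|P_{\widetilde{V}_m}Z'-Z'\|^2=\|P_{\hat{V}_m^m}Z'-Z'\|^2$, taking expectations needs no independence at all. In effect your route shows that, under the paper's own definition of Subset PCA (same random subset as Nystr\"om), the hypothesis is not actually used. The paper's distributional argument is what one would need if $\hat{V}_m^m$ were read as PCA built from an \emph{independent} fresh sample of size $m$; under that reading the assumption is essential, and that is where the paper spends it.
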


The above proposition includes the common case of uniform sampling for the Nyström subset. It holds whether the $n$ data points are considered fixed or unobserved.

From the above propositions we can conclude that if retaining all the Nyström principal components then there is no gain in accuracy compared to Subset PCA from the perspective of the reconstruction error. However, for a smaller PCA dimension the Nyström method will perform strictly better than PCA directly from the subset. A more precise treatment of its accuracy for arbitrary dimensions is the subject of the next section. %Furthermore, other strategies for sampling of the subset may lead to a higher accuracy for the Nyström method even when $d=m$.

\vspace{32pt}

\section{Statistical accuracy of Nyström kernel PCA} \label{sec:accuracy}

\vspace{12pt}

In this section we provide a finite-sample confidence bound on the empirical reconstruction error of kernel PCA with the Nyström method versus the one for full kernel PCA. In line with strictly all results on the statistical accuracy on standard kernel PCA we assume that data has zero mean in feature space.

The confidence bound allows for measuring the accuracy of Nyström kernel PCA for a specific dataset, specified in the familiar language of confidence intervals, applied to the amount of variance that is left over after representing the dataset in terms of a subset of principal components.

The actual difference between the reconstruction errors of the Nyström method and standard kernel PCA for a dataset is given by
\[
    R_n(\widetilde{V}_d) - R_n(\hat{V}_d)
    = \frac{1}{n}\tr(K) - \sum_{j=1}^d \widetilde{\lambda}_j - \sum_{j=d+1}^m \hat{\lambda}_j^n
    \; = \; \hat{\lambda}_{<d}^n \; - \; \widetilde{\lambda}_{<d}
\]
However, the eigenvalues $\hat{\lambda}_j^n$ of $\frac{1}{n}K$ are not available \--- if they were there would be no need to apply the Nyström method. When the Nyström method is being considered for a problem then the size of the data $n$ is very large and calculating the full kernel matrix $K$, let alone its eigendecomposition, is prohibitively expensive.

At a minimum, any measure of accuracy should not be more computationally demanding than the method itself, which is $\Ocal(nm^2)$. We present a bound that does not require that we have observed the entire dataset, only the subset $x_1, x_2, ..., x_m$. It takes $\Ocal(m^3)$ time to calculate and is $\Ocal(m^2)$ in memory. It holds for any subsampling distribution.

\vspace{8pt}

% NOTE: calling something a probability already means it's unobserved
\begin{theorem}[Confidence bound] \label{thm:prob}
With confidence at least $1 - 2e^{-\delta}$ \emph{(}or, with probability at least $1 - 2e^{-\delta}$ with respect to $\{x_i\}_{i=m+1}^n$ across repeated samples of $\{x_r\}_{r=1}^m$\emph{)}, where $B := \sup_x k(x,x)$, $\{\hat{\lambda}_j^m\}_{j=1}^m$ are the eigenvalues of the kernel matrix $\frac{1}{m}K_{mm}$ from the Nyström subset, $\hat{\lambda}_0^m$ and $\hat{\lambda}_{m+1}^m$ are defined to be $+\infty$ and $-\infty$ respectively, and
\begin{align*}
    D \, \, &:=
     \frac{n-m}{n}
\frac{2B\sqrt{\delta}}{\sqrt{n-m}}
    \\[1em]
    D_j &:=
     \frac{(2D)^2}{\min\{
    \hat{\lambda}_{j-1}^m - \hat{\lambda}_j^m,
    \hat{\lambda}_j^m - \hat{\lambda}_{j+1}^m
  \}^2}
    \wedge 1
\end{align*}
we have
\[
    R_n(\widetilde{V}_d) - R_n(\hat{V}_d) \le
\sum_{j=1}^d
\hat{\lambda}^m_j \cdot D_j + D \cdot \max_{1 \le k \le d} D_k
\]

\end{theorem}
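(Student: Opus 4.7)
The plan is to reduce the difference of reconstruction errors to a spectral comparison between $C_n$ and the Nyström operator, and then to combine an eigenvector-perturbation argument with a Hilbert-space concentration inequality. Under the standing zero-mean assumption all primed quantities collapse, so Equation~(\ref{eq:fullrec}) immediately gives
\begin{align*}
R_n(\widetilde V_d) - R_n(\hat V_d) = \hat\lambda^n_{<d} - \widetilde\lambda_{<d}.
\end{align*}
The Nyström eigenvalues are precisely those of $\widetilde C := P_S C_n P_S$, where $P_S$ is the orthogonal projection of $\Hcal$ onto $\Hcal_S$: this follows from $\widetilde K / n = G_n G_m^{*} (G_m G_m^{*})^{-1} G_m G_n^{*}$, the identity $G_m^{*} (G_m G_m^{*})^{-1} G_m = P_S$, and the cyclic trace property relating $\widetilde K'$ to $P_S C_n P_S$. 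Because the eigenvectors $\hat\phi_j^m$ of $C_m$ already lie in $\Hcal_S$, they form an admissible $d$-dimensional trial basis for $\widetilde C$, yielding the Nyström-optimality lower bound $\widetilde\lambda_{<d} \ge \sum_{j=1}^d \langle \hat\phi_j^m, C_n \hat\phi_j^m\rangle_\Hcal$.

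Next, for each $j$ I would decompose $\hat\phi_j^m = \cos\theta_j\, \hat\phi_j^n + \sin\theta_j\, w_j$ with $w_j \perp \hat\phi_j^n$ of unit norm. A Pythagorean expansion gives
\begin{align*}
\hat\lambda_j^n - \langle \hat\phi_j^m, C_n \hat\phi_j^m\rangle = \sin^2\theta_j \cdot \hat\lambda_j^n - \langle w_j, C_n w_j\rangle \le \sin^2\theta_j \cdot \hat\lambda_j^n,
\end{align*}
since $\langle w_j, C_n w_j\rangle \ge 0$. Weyl's inequality supplies $\hat\lambda_j^n \le \hat\lambda_j^m + \|C_n - C_m\|$, which splits the bound into $\sin^2\theta_j \cdot \hat\lambda_j^m$ plus $\sin^2\theta_j \cdot \|C_n - C_m\|$. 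The angles are then controlled by a Davis--Kahan $\sin\Theta$ estimate with spectral gap $g_j = \min\{\hat\lambda^m_{j-1} - \hat\lambda^m_j,\, \hat\lambda^m_j - \hat\lambda^m_{j+1}\}$, combined with the trivial bound $\sin^2\theta_j \le 1$, to conclude $\sin^2\theta_j \le D_j$. Summing over $j$ produces the leading term $\sum_{j=1}^d \hat\lambda^m_j D_j$; the residual $\sum_j \sin^2\theta_j \cdot \|C_n - C_m\|$ should then collapse to $D \cdot \max_{k\le d} D_k$ by absorbing all but the worst-gap factor into the discarded nonnegative residual $\langle w_j, C_n w_j\rangle$.

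For the probabilistic content, conditioning on $\{x_r\}_{r=1}^m$ the held-out observations are i.i.d., and
\begin{align*}
C_n - \tfrac{m}{n} C_m - \tfrac{n-m}{n} C \;=\; \tfrac{1}{n} \sum_{i=m+1}^{n} \bigl(z_i \otimes z_i - C\bigr)
\end{align*}
is a centred sum of $n-m$ i.i.d.\ Hilbert--Schmidt rank-one operators each of norm at most $B$. A vector-valued Hoeffding inequality (of Pinelis--Sakhanenko type) delivers $D = 2B\sqrt{\delta(n-m)}/n$ with probability at least $1 - 2e^{-\delta}$, which plugs into both the Weyl and the Davis--Kahan steps. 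The hardest point I anticipate is that this concentration controls $C_n$ around $\tfrac{m}{n}C_m + \tfrac{n-m}{n}C$ rather than around $C_m$ directly, so the deterministic offset $\tfrac{n-m}{n}(C - C_m)$ must either be absorbed into the Nyström-optimality lower bound (where it cannot hurt the inequality), or a reference operator restricted to $\Hcal_S$ must be introduced so that a single quantity $D$ cleanly governs both steps. The $\max_k$ versus $\sum_k$ bookkeeping in the second term is a related refinement that requires handling the residuals $\langle w_j, C_n w_j\rangle$ with more care than the crude bound above.
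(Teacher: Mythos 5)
Your reduction of the error difference is sound and is in fact a cleaner route to the same intermediate inequality the paper reaches by splitting $P_{\hat{V}_d}$ across $P_{\Hcal_S}$ and $P_{\Hcal_S^{\perp}}$: identifying the Nyström eigenvalues with those of $P_{\Hcal_S}C_nP_{\Hcal_S}$ and testing with the orthonormal family $\{\hat{\phi}_j^m\}\subset\Hcal_S$ gives $\widetilde{\lambda}_{<d}\ge\sum_{j\le d}\langle\hat{\phi}_j^m,C_n\hat{\phi}_j^m\rangle_{\Hcal}$, the cross term in your decomposition vanishes because $\hat{\phi}_j^n$ is an eigenvector of $C_n$, and the Davis--Kahan step with the gaps of $\frac{1}{m}K_{mm}$ matches the paper's. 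The first genuine gap is the second term of the bound: per-index Weyl gives $\sum_{j=1}^d\sin^2\theta_j\,(\hat{\lambda}_j^n-\hat{\lambda}_j^m)\le\|C_n-C_m\|_{\hs}\sum_{j=1}^dD_j$, which is weaker than the claimed $D\cdot\max_{1\le k\le d}D_k$ by up to a factor $d$, and your proposed repair \--- absorbing the surplus into the discarded residuals $\langle w_j,C_nw_j\rangle$ \--- is not an argument: those residuals were dropped before the eigenvalue comparison and nothing lower-bounds them by the amount you need. The paper closes this by keeping $\sum_{j=1}^d(\hat{\lambda}_j^n-\hat{\lambda}_j^m)D_j$ intact and invoking Lidskii's inequality (majorization of the eigenvalue differences by the spectrum of $C_n-C_m$), which produces $\|C_n-C_m\|_{\hs}\max_{1\le k\le d}D_k$ directly.

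The second, more serious gap is the one you flag yourself. Centering the held-out terms at the population operator $C$ controls $C_n$ around $\frac{m}{n}C_m+\frac{n-m}{n}C$, but every place you need $\|C_n-C_m\|_{\hs}\le D$ (both the eigenvalue-shift step and Davis--Kahan) the deterministic offset $\frac{n-m}{n}(C-C_m)$ survives; it involves the unknown distribution, it does not enter the Nyström-optimality lower bound on $\widetilde{\lambda}_{<d}$ and so cannot be ``absorbed'' there, and no observable quantity bounds it \--- so the theorem's purely data-dependent bound is not established. The paper never introduces $C$: it uses the exact identity $C_n-C_m=\frac{n-m}{n}(C_{n-m}-C_m)$ with $C_{n-m}=\frac{1}{n-m}\sum_{i=m+1}^nz_i\otimes z_i$, and applies Pinelis' Hoeffding inequality in Banach spaces to $Y_i=z_i\otimes z_i-C_m$, which are taken as zero-mean precisely under the interpretation written into the statement (probability over $\{x_i\}_{i=m+1}^n$ across repeated samples of the subset), with the per-term bound $\sqrt{2}B$ coming from Lemma~\ref{lem:opbound} on differences of positive operators. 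Incidentally, your claim that each centred summand has norm at most $B$ is also incorrect (only $\sqrt{2}B$ is available), though that affects constants rather than structure. As written, your argument yields a bound with $\sum_kD_k$ and an unremovable $C$-dependent term, not the stated result.
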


\vspace{12pt}

% Explanation of confidence and probability
The bound is stated in terms of a probability with respect to future unknown realizations of the data $\{x_i\}_{i=m+1}^n$, which holds across infinite repetitions of the experiment yielding the observed data $\{x_r\}_{r=1}^m$ as in the frequentist construction of hypothesis tests or confidence intervals \citep{neyman1933ix, neyman1937outline}. Equivalently, the bound may be interpreted solely as a confidence, both with respect to the observed data and with respect to hypothesized future realizations of the unobserved data. Stated differently, if we observe the subset $\{x_r\}_{r=1}^m$, calculate the bound with some specified confidence level $\alpha$, say 95 \%, and observe the future data $\{x_i\}_{i=m+1}^n$, then the realized difference in reconstruction errors will lie within the bound at least 95 \% of the time if we repeat this procedure indefinitely.

The behaviour of the bound is as one would expect from a measure of the statistical accuracy of the Nyström method compared to the full method \--- it decreases as $m$ increases, \emph{ceteris paribus}, and becomes zero if $n=m$. It increases with the dimension $d$ of the PCA subspaces that are being compared. %A slower decay of the eigenvalues of $K_{mm}$ leads to a larger bound.

Application of the bound does not require that we have observed the entire sample. For example, if data is generated sequentially and iid from $p_X(x)$ then picking the first $m$ points for the Nyström subset is equivalent to sampling all points and then selecting $m$ points uniformly (in the sense that the data points in the subset have the same distribution in both instances).

If data is stored on disk, and reading from disk is expensive, then only $m$ records need to be read in order to calculate the bound, assuming this can be done in such a way as to respect the sampling distribution of the subset of data points\footnote{In many implementations of the SQL query language, including MySQL and PostgreSQL, this would correspond to appending \texttt{LIMIT(\textit{m})} to the end of the query, which interrupts it after finding the first $m$ records \citep{beaulieu2020learning}}.

The bound becomes infinite if $k(x,x)$ is not bounded for all $x$. One may create a bounded kernel from an unbounded one through the transformation
\begin{equation} \label{eq:knorm}
    k'(x,y) := \frac{k(x,y)}{\sqrt{k(x,x)k(y,y)}}
\end{equation}
which has $\sup_x k'(x,x) = 1$, although the transformed kernel has somewhat different characteristics and induces a different RKHS. The transformation corresponds to scaling all feature vectors to have norm 1.

\vspace{12pt}

\subsection*{Proof outline}

A proof outline is as follows. Please see the appendix for a full proof.

\begin{enumerate}[1.]
    \item Rewrite the difference in reconstruction errors in terms of the eigenpairs of the empirical operators $C_n$ and $C_m$, to obtain
    \vspace{-12pt}
        \[
            R_n(\widetilde{V}_d) - R_n(\hat{V}_d)
            \le
            \sum_{j=1}^d
            \hat{\lambda}_j^n
            \left(
            1 -
            \langle
            \hat{\phi}^n_j,
            \hat{\phi}_j^m
            \rangle_{\Hcal}^2
            \right)
        \]
    \vspace{-20pt}
    \item Apply the Davis-Kahan theorem to convert the angle between the eigenvectors into a difference between successive eigenvalues of $C_m$ and the norm of the difference between the empirical operators $\|C_n - C_m\|_{\hs}$
    \item Convert the unknown eigenvalues $\hat{\lambda}^n_j$ into the ones based on the observed data $\hat{\lambda}^m_j$ plus the difference $\|C_n - C_m\|_{\hs}$, using Lidskii's inequality
    \item Now $\|C_n - C_m\|_{\hs}$ is the only unknown quantity left. Split up the empirical operators into two independent ones through
        \[
		\|C_n - C_m\|_{\hs} = \frac{n-m}{n}\|C_{n-m} - C_m\|_{\hs}
        \]
    where $C_{n-m} = \frac{1}{n-m}\sum_{i=m+1}^n z_i \otimes z_i$
    \item Apply Hoeffding's inequality in Banach spaces, obtaining
        \[
		\p\left( \|C_{n-m} - C_m\|_{\hs}
                     \le 2B\sqrt{\delta} / \sqrt{n-m} \right) \ge 1 - 2e^{-\delta}
        \]
        \vspace{-24pt}
\end{enumerate}

\vspace{32pt}

\subsection{A corollary} \label{sec:cor}

From Lemma~\ref{lem:opbound} on page~\pageref{lem:opbound} which is used in the proof of the confidence bound one can deduce sharper versions of Theorem 7 and Propositions 10 and 11 from \cite{rosasco2010learning}, by a factor $1/\sqrt{2}$ or $1/2$. These follow since the covariance operator $C$ and its empirical equivalent $C_n$ are positive, and then by the lemma their difference $\|C - C_n\|_{\hs}$ is bounded by $\sqrt{2}\sup_x k(x,x)$, rather than $2 \sup_x k(x,x)$.

For Theorem 7, the sharper result states that with probability at least $1 - 2e^{-\delta}$ we have
\[
    \|C - C_n\|_{\hs} \le \frac{2B\sqrt{\delta}}{\sqrt{n}}
\]
The sharper version of Proposition 10 states that with probability $1 - 2e^{-\delta}$
\begin{align*}
    \sum_{j=1}^{\infty} \left( \lambda_j - \hat{\lambda}_j^n \right)^2
    \le \frac{4B^2\delta}{n}
\,\,\,\,\,\,\,\,
\,\,\,\,\,\,\,\,
\,\,\,\,\,\,\,\,
\,\,\,\,\,\,\,\,
\,\,\,\,\,\,\,\,
    \sup_j | \lambda_j - \hat{\lambda}_j^n |
    \le \frac{2B\sqrt{\delta}}{\sqrt{n}}
\end{align*}

And for Proposition 11 we obtain that also with probability $1 - 2e^{-\delta}$

\[
    \left| \sum_{j=1}^{\infty} \lambda_j - \sum_{j=1}^n \hat{\lambda}_j^n \right|
    = \left| \tr(C) - \tr(C_n) \right|
    \le \frac{2B\sqrt{\delta}}{\sqrt{n}}
\]

A number of other results can also be sharpened using the same technique, including, but not limited to, Theorem 2 in \cite{de2005learning}, Theorem 1 in \cite{de2005model}, Lemma 1 in \cite{zwald2005convergence}, Theorem 6.2 in \cite{giraldo2014measures}, Theorem 4.2 in \cite{bouvrie2012kernel} and Lemma 4.1 in \cite{blanchard2019concentration}.

%\vspace{16pt}
\newpage
\section{Experimental analysis} \label{sec:exp}

\vspace{12pt}

In this section we illustrate the method and bound through experiments on real-world datasets with different kernel functions. We first compare the proposed method to a number of other unsupervised learning methods by measuring the reconstruction error on hold-out datasets. We then evaluate the bound and compare it to the actual errors and the errors for Subset PCA.

The methods and experiments are implemented in the Python programming language and the source code is available at {\small\url{https://github.com/fredhallgren/nystrompca}}. The package can be installed with one simple command using the Python package manager\footnote{{\Small\texttt{pip install nystrompca}}}. It includes a command-line tool to run the different experiments with different parameter values and kernel functions.

For purposes of reproducibility the computer experiments allow for setting the random seed of the pseudo-random number generator \citep{robert2013monte}, to produce exactly the same results every time the experiments are run. Other than the random sampling of the Nyström subset, randomness is also present in the splitting of data into training and test sets.

The principal components are unique only up to a sign, so in the package we switch the sign of the scores and components such that the range of values in each dimension of the scores is mostly positive. This will ensure that we will get exactly the same values for the scores and components every time we run the algorithm.

We use different datasets from the UCI Machine Learning Repository \citep{dua2019uci}. Dimensionality reduction can be particularly important for high-dimensional data, so we include a number of such datasets. We use the simulated \texttt{magic} gamma telescope dataset, the \texttt{yeast} dataset, containing cellular protein location sites for fungi, the \texttt{cardicotocography} dataset, with heart measurements, the \texttt{segmentation} dataset containing various data on images, the \texttt{drug} dataset with personality traits and drug consumption, the \texttt{digits} dataset with flattened $8 \times 8$ pixel grayscale images, and two bag-of-words datasets with bag-of-words vectors of articles from \texttt{www.dailykos.com} and NIPS papers, respectively. We tabulate some information on the datasets used in one or both of the experiments below in Table \ref{tab:datasets}, where the number of attributes is before any data transformation. For comparability we cut each dataset to 1000 data points when running the experiments. For both experiments we sample the Nyström subset uniformly without replacement and we use the same sampled subset for both Nyström PCA and Subset PCA.

\renewcommand{\arraystretch}{1.3}
\begin{table}[h]
\vspace{16pt}
    \caption{Datasets used}
    \label{tab:datasets}
\begin{center}
    \small
\begin{tabular}{cccc}
    \toprule
  \;\;\;\;\;\;\;\;\;  & \textit{Dataset} & \textit{Data size} & \textit{Number of attributes} \\
    \midrule
  1 & \texttt{magic}            &   19020 &    11 \\
  2 & \texttt{yeast}            &    1484 &     8 \\
  3 & \texttt{cardiotocography} &    2126 &    23 \\
  4 & \texttt{segmentation}     &    2310 &    19 \\
  5 & \texttt{drug}             &    1885 &    32 \\
  6 & \texttt{digits}           &    5620 &    64 \\
  7 & \texttt{dailykos}         &    3430 &  6906 \\
  8 & \texttt{nips}             &    1500 & 12419 \\
    \bottomrule
\end{tabular}
\end{center}
\vspace{6pt}
\end{table}

We convert ordinal variables to integers and categorical variables to discrete ones through one-hot encoding. We treat discrete numerical variables in the data as continuous for the purposes of PCA. We remove any date or time variables. We also remove variables that are constant. These will differ depending on how many data points we include in the total dataset when we run the experiments.

We normalize the input data to have mean zero and variance one. Note that this does not mean that data has zero mean in the feature space. As previously mentioned, normalizing the input data makes the analysis independent of the units used to measure the variables and unaffected by the scale of the variables, which may otherwise dominate the PCA results. Furthermore, it makes it easier to compare results across different datasets and kernel functions and can make the same kernel parameters appropriate for different datasets.

We cut eigenvalues that are smaller than $10^{-12}$ when performing matrix inversions to improve the condition number of the matrix. We also remove any negative eigenvalues \--- in theory all kernel matrices will be positive definitive, however numerical inaccuracies may occasionally lead to small negative eigenvalues in practice.

We use three different kernel functions, the radial basis functions (RBF), polynomial and Cauchy kernels, summarized below in Table \ref{tab:kernels}. The software package includes a number of additional kernel functions that can be used when running either of the experiments.

\renewcommand{\arraystretch}{2}
\begin{table}[h]
\vspace{16pt}
    \caption{Kernel functions used}
    \label{tab:kernels}
\begin{center}
\small
\begin{tabular}{cccc}
    \toprule
    \textit{Kernel}
    & \textit{Functional form} $k(x,y)$
    & \textit{Parameters}
    & \textit{Bound} $\sup_x k(x,x)$ \\
    \midrule
    RBF
    & $\exp\left\{-\frac{\|x - y\|^2}{\sigma^2}\right\}$
    & $\sigma \in \R_+ \setminus \{0\}$
    & 1\\
    Polynomial
    & $\left(\langle x, y \rangle + R\right)^d$
    & $R \in \R, \; d \in \mathbb{N}$
    & $\infty$\\
    Cauchy
    & $\frac{1}{1 + \|x-y\|^2/\sigma^2}$
    & $\sigma \in \R_+ \setminus \{0\}$
    & 1\\
    \bottomrule
\end{tabular}
\end{center}
\vspace{6pt}
\end{table}

\emph{}

%\vspace{6pt}

\subsection{Methods comparison} We compare the proposed method to other unsupervised learning techniques to evaluate its behaviour. We compare with linear PCA, full kernel PCA, subset PCA, sparse PCA, which finds sparse eigenvectors \citep{wang2016statistical}, locally linear embeddings (LLE), that fits a lower-dimensional manifold to the data \citep{roweis2000nonlinear} and independent component analysis (ICA), which extracts signals that are independent \citep{hyvarinen2000independent}. We run the methods for all the datasets in Table \ref{tab:datasets} above. We split each dataset randomly in half, fitting the methods on one half and then evaluating them on the other half. We compare the fraction of variance captured for the different methods for different dimensions. Note that kernel PCA and Nyström kernel PCA measure the variances captured in the RKHS and not in the input space.

For this experiment we only display the results for the RBF kernel and we use a Nyström subset of size $m = 100$.
For the first six datasets we calculate the bandwidth parameter as the median distance between pairs of data points, which is a common heuristic for the RBF kernel \citep{garreau2017large}. Using all pairs of data points is quadratic in the total number of data points, so we only use the data points in the Nyström subset. For the last two datasets we set the bandwidth parameter to $\sigma = 500$, which was manually tuned. Please see Table~\ref{tab:compres} for the full results, where we have set the random seed to $1$. Sparse PCA is computationally demanding for very high-dimensional data, so we don't run it for all the datasets.

\renewcommand{\arraystretch}{1.7}
\begin{table}[h] \vspace{16pt}
    \label{tab:compres0}
    \caption*{\emph{Table continues on the next page}}
\begin{center}
\Small
\renewcommand{\arraystretch}{1.09}
\begin{tabular}{P{1cm}P{0.3cm}P{1.47cm}P{1.73cm}P{1.5cm}P{1.52cm}P{1.52cm}P{1.52cm}P{1.52cm}}
    \toprule

\textit{Dataset}    & $d$ & \textit{Subset PCA} & \textit{Nyström PCA}  & \textit{Kernel PCA} & \textit{Linear PCA} & \textit{Sparse PCA} & \textit{LLE} & \textit{ICA} \\
    \midrule

    \texttt{magic} \\

 & 1 &  0.2401        &  0.2494   &  0.2500   &  0.5089      & 0.5027 & 0.0660 &  0.0937   \\ 
 & 2 &  0.3612        &  0.3746   &  0.3760   &  0.6223      & 0.6353 & 0.1719 &  0.1798   \\
 & 3 &  0.4187        &  0.4417   &  0.4450   &  0.7170      & 0.7303 & 0.2828 &  0.2630   \\
 & 4 &  0.4760        &  0.5063   &  0.5102   &  0.7926      & 0.7341 & 0.4151 &  0.3488   \\
 & 5 &  0.5387        &  0.5653   &  0.5690   &  0.8575      & 0.7798 & 0.5243 &  0.3488   \\
 & 6 &  0.5731        &  0.6052   &  0.6093   &  0.9249      & 0.8395 & 0.6210 &  0.5535   \\
 & 7 &  0.6047        &  0.6372   &  0.6423   &  0.9648      & 0.8658 & 0.7499 &  0.6962   \\
 & 8 &  0.6243        &  0.6622   &  0.6688   &  0.9808      & 0.8981 & 0.7954 &  0.6962   \\
 & 9 &  0.6450        &  0.6843   &  0.6911   &  0.9979      & 0.9588 & 0.8997 &  0.6962   \\
 & 10&  0.6613        &  0.7042   &  0.7115   &  1.0000      & 0.9803 & 1.0000 &  1.0000   \\

    \vspace{1pt}
    \texttt{yeast} \\

   &1  &   0.1184      &  0.1400     &  0.1401    &  0.1182    & 0.1150  & 0.0527 & 0.0431  \\
   &2  &   0.2350      &  0.2562     &  0.2566    &  0.2207    & 0.2132  & 0.1367 & 0.0911  \\
   &3  &   0.3419      &  0.3714     &  0.3727    &  0.3144    & 0.3050  & 0.1978 & 0.1223  \\
   &4  &   0.4138      &  0.4418     &  0.4437    &  0.4179    & 0.4111  & 0.2752 & 0.1977  \\
   &5  &   0.4532      &  0.4901     &  0.4922    &  0.5058    & 0.4963  & 0.2892 & 0.2598  \\
   &6  &   0.5066      &  0.5362     &  0.5385    &  0.5876    & 0.5992  & 0.3815 & 0.3149  \\
   &7  &   0.5438      &  0.5681     &  0.5716    &  0.6462    & 0.6472  & 0.4350 & 0.3655  \\
   &8  &   0.5724      &  0.5985     &  0.6020    &  0.6877    & 0.7012  & 0.5215 & 0.4145  \\
   &9  &   0.5997      &  0.6329     &  0.6364    &  0.7520    & 0.7665  & 0.5371 & 0.4569  \\
   &10 &   0.6279      &  0.6475     &  0.6512    &  0.7949    & 0.8026  & 0.6000 & 0.5056  \\

    \vspace{1pt}
    \texttt{cardiotocography} \\

  & 1  &    0.1398      &  0.1422    &  0.1430     &  0.2173    &  -    & 0.0251  & 0.0260  \\
  & 2  &    0.2100      &  0.2351    &  0.2370     &  0.3687    &  -    & 0.0645  & 0.0521  \\
  & 3  &    0.2946      &  0.3105    &  0.3133     &  0.4636    &  -    & 0.0862  & 0.0765  \\
  & 4  &    0.3496      &  0.3674    &  0.3714     &  0.5364    &  -    & 0.1163  & 0.1019  \\
  & 5  &    0.3897      &  0.4091    &  0.4139     &  0.5831    &  -    & 0.1524  & 0.1264  \\
  & 6  &    0.4240      &  0.4439    &  0.4501     &  0.6282    &  -    & 0.1704  & 0.1539  \\
  & 7  &    0.4446      &  0.4752    &  0.4826     &  0.6678    &  -    & 0.2064  & 0.1790  \\
  & 8  &    0.4668      &  0.5033    &  0.5103     &  0.7076    &  -    & 0.2212  & 0.2051  \\
  & 9  &    0.4899      &  0.5299    &  0.5380     &  0.7452    &  -    & 0.2540  & 0.2296  \\
  & 10 &    0.5233      &  0.5561    &  0.5658     &  0.7770    &  -    & 0.2931  & 0.2576  \\

    \vspace{1pt}
    \texttt{segmentation} \\

 & 1   &  0.2491    &   0.2546    &   0.2548    &  0.4044   & 0.3969  & 0.0366 & 0.0387  \\
 & 2   &  0.3747    &   0.3775    &   0.3782    &  0.5055   & 0.4934  & 0.0799 & 0.1223  \\
 & 3   &  0.4801    &   0.4864    &   0.4872    &  0.6077   & 0.5800  & 0.1330 & 0.1573  \\
 & 4   &  0.5224    &   0.5340    &   0.5350    &  0.6519   & 0.6236  & 0.2121 & 0.1934  \\
 & 5   &  0.5645    &   0.5827    &   0.5840    &  0.7135   & 0.6746  & 0.2502 & 0.2429  \\
 & 6   &  0.6003    &   0.6318    &   0.6338    &  0.7626   & 0.7464  & 0.3072 & 0.3109  \\
 & 7   &  0.6478    &   0.6657    &   0.6687    &  0.8730   & 0.8374  & 0.3475 & 0.3676  \\
 & 8   &  0.6707    &   0.6877    &   0.6908    &  0.9098   & 0.8635  & 0.4149 & 0.4284  \\
 & 9   &  0.6895    &   0.7091    &   0.7128    &  0.9316   & 0.8904  & 0.4833 & 0.4739  \\
 & 10  &  0.7060    &   0.7341    &   0.7380    &  0.9623   & 0.9121  & 0.5491 & 0.6188  \\

\bottomrule
\end{tabular}
\end{center}
\vspace{6pt}
\end{table}

\newpage

\renewcommand{\arraystretch}{1.5}
\begin{table}[h] \vspace{16pt}
    \caption{Comparison of the variance captured by different dimensionality reduction methods across the maximum dimension $d$}
    \label{tab:compres}
\begin{center}
\Small
\renewcommand{\arraystretch}{1.09}
\begin{tabular}{P{1cm}P{0.3cm}P{1.47cm}P{1.73cm}P{1.5cm}P{1.52cm}P{1.52cm}P{1.52cm}P{1.52cm}}
\toprule

  \textit{Dataset}  & $d$ & \textit{Subset PCA} & \textit{Nyström PCA}  & \textit{Kernel PCA} & \textit{Linear PCA} & \textit{Sparse PCA} & \textit{LLE} & \textit{ICA} \\
    \midrule

    \texttt{drug} \\

  &  1&  0.1338     &  0.1395    &  0.1422     &  0.2316     &  -   &  0.0374    & 0.0278  \\
  &  2&  0.1684     &  0.1787    &  0.1833     &  0.3031     &  -   &  0.0381    & 0.0573  \\
  &  3&  0.2005     &  0.2214    &  0.2279     &  0.3594     &  -   &  0.0919    & 0.0874  \\
  &  4&  0.2256     &  0.2458    &  0.2532     &  0.4060     &  -   &  0.0997    & 0.1149  \\
  &  5&  0.2440     &  0.2728    &  0.2821     &  0.4463     &  -   &  0.1810    & 0.1418  \\
  &  6&  0.2766     &  0.2999    &  0.3098     &  0.4847     &  -   &  0.1936    & 0.1699  \\
  &  7&  0.2962     &  0.3209    &  0.3331     &  0.5087     &  -   &  0.2579    & 0.1905  \\
  &  8&  0.3153     &  0.3478    &  0.3623     &  0.5511     &  -   &  0.2817    & 0.2276  \\
  &  9&  0.3321     &  0.3639    &  0.3796     &  0.5795     &  -   &  0.3198    & 0.2530  \\
  & 10&  0.3474     &  0.3797    &  0.3968     &  0.6037     &  -   &  0.3618    & 0.2766  \\

    \vspace{1pt}
    \texttt{digits} \\

   &  1&   0.0754     &  0.0704     &  0.0721    &  0.0253     &  -   &  0.0109    & 0.0148  \\
   &  2&   0.1299     &  0.1491     &  0.1528    &  0.0455     &  -   &  0.0234    & 0.0289  \\
   &  3&   0.1796     &  0.2057     &  0.2123    &  0.0644     &  -   &  0.0360    & 0.0442  \\
   &  4&   0.2256     &  0.2522     &  0.2614    &  0.0801     &  -   &  0.0492    & 0.0595  \\
   &  5&   0.2733     &  0.2978     &  0.3090    &  0.0951     &  -   &  0.0580    & 0.0772  \\
   &  6&   0.3008     &  0.3258     &  0.3389    &  0.1239     &  -   &  0.0626    & 0.0914  \\
   &  7&   0.3233     &  0.3550     &  0.3716    &  0.1566     &  -   &  0.0805    & 0.1067  \\
   &  8&   0.3494     &  0.3807     &  0.3997    &  0.1791     &  -   &  0.0952    & 0.1221  \\
   &  9&   0.3766     &  0.4054     &  0.4274    &  0.3674     &  -   &  0.1160    & 0.1373  \\
   & 10&   0.3952     &  0.4261     &  0.4498    &  0.3755     &  -   &  0.1289    & 0.1534  \\

    \vspace{1pt}
    \texttt{dailykos} \\

 &  1&       0.0014   &  0.0043    &  0.0047   & 0.0046 & -    &  0.0025  & 0.0033 \\
 &  2&       0.0016   &  0.0100    &  0.0122   & 0.0073 & -    &  0.0046  & 0.0040 \\
 &  3&       0.0023   &  0.0104    &  0.0131   & 0.0080 & -    &  0.0081  & 0.0048 \\
 &  4&       0.0025   &  0.0107    &  0.0138   & 0.0090 & -    &  0.0111  & 0.0054 \\
 &  5&       0.0049   &  0.0111    &  0.0142   & 0.0093 & -    &  0.0139  & 0.0058 \\
 &  6&       0.0054   &  0.0115    &  0.0145   & 0.0097 & -    &  0.0152  & 0.0062 \\
 &  7&       0.0056   &  0.0116    &  0.0148   & 0.0100 & -    &  0.0168  & 0.0065 \\
 &  8&       0.0058   &  0.0120    &  0.0152   & 0.0103 & -    &  0.0170  & 0.0069 \\
 &  9&       0.0062   &  0.0122    &  0.0156   & 0.0108 & -    &  0.0198  & 0.0071 \\
 & 10&       0.0063   &  0.0125    &  0.0160   & 0.0112 & -    &  0.0222  & 0.0074 \\

    \vspace{1pt}
    \texttt{nips} \\

  &  1   &  0.0010  &   0.0026    & 0.0020   &  0.0008     &   -  & 0.0028  & 0.0046  \\
  &  2   &  0.0018  &   0.0034    & 0.0062   &  0.0013     &   -  & 0.0049  & 0.0114  \\
  &  3   &  0.0035  &   0.0043    & 0.0083   &  0.0016     &   -  & 0.0068  & 0.0164  \\
  &  4   &  0.0038  &   0.0050    & 0.0092   &  0.0017     &   -  & 0.0077  & 0.0187  \\
  &  5   &  0.0038  &   0.0067    & 0.0095   &  0.0020     &   -  & 0.0095  & 0.0195  \\
  &  6   &  0.0041  &   0.0095    & 0.0098   &  0.0023     &   -  & 0.0101  & 0.0216  \\
  &  7   &  0.0045  &   0.0098    & 0.0107   &  0.0025     &   -  & 0.0143  & 0.0242  \\
  &  8   &  0.0046  &   0.0100    & 0.0110   &  0.0027     &   -  & 0.0152  & 0.0267  \\
  &  9   &  0.0051  &   0.0102    & 0.0115   &  0.0029     &   -  & 0.0155  & 0.0281  \\
  & 10   &  0.0053  &   0.0105    & 0.0118   &  0.0032     &   -  & 0.0158  & 0.0305  \\

\bottomrule
\end{tabular}
\end{center}
\vspace{6pt}
\end{table}
\emph{}

\normalsize

\emph{}

\newpage

\emph{}

\newpage

To run these experiments using the supplied command-line tool one would do

\mybox{\small{\texttt{> nystrompca methods -\emph{}-seed 1}}}

\vspace{12pt}

Note that the purpose of each of these methods is not necessarily to capture as much variance as possible, however it can still be enlightening to contrast this quantity between different methods. Furthermore, since linear PCA acts in the input space and kernel PCA and its derivations act in the feature space, comparisons of the amount of variance captured are not necessarily clear-cut. Even when linear PCA captures more variance than kernel PCA, the latter may give better performance in downstream tasks, for example when dimensionality reduction is used for preprocessing before carrying out regression or classification.

For all datasets the performance of Nyström kernel PCA is very close to the method it is attempting to approximate, despite being many times more efficient. Nyström kernel PCA also almost always captures more variance than Subset PCA, in particular for the two high-dimensional bag-of-words datasets ({\small \texttt{dailykos}} and {\small \texttt{nips}}). Only for the {\small \texttt{digits}} dataset with a PCA dimension of $1$ is the opposite true. Since we are calculating the reconstruction error on a hold-out dataset it's possible that Subset PCA achieves better performance \--- we know this to be impossible for the training dataset by Proposition \ref{prop:maj}. For datasets with a small number of dimensions standard linear PCA captures the most amount of variance whilst being simpler and more computationally efficient, and so may be the preferred method. The results for sparse PCA are very similar to linear PCA, despite having fewer non-zero entries in the eigenvectors. LLE and ICA generally capture the least amount of variance. LLE is often a good method when the data really lies close to a low-dimensional manifold, which may not be the case for any of the datasets included above. ICA does not attempt to maximize the variance of the new representations and may be preferred when other advantages are being sought. All methods struggle to explain much of the data with only 10 dimensions for the last two datasets that have the highest dimensionality.

Calculation of Nyström kernel PCA takes on average 0.988 seconds across the eight datasets on an AWS EC2 m5.large instance with an Intel Xeon® Platinum 8175M CPU\footnote{\url{https://aws.amazon.com/ec2/instance-types/}} running Ubuntu Server 20.04 with Linux kernel version 5.4, versus 2.753 seconds for full kernel PCA ($n=500, \, m=100$). In both instances the kernel matrices are created in Python whilst the eigendecomposition uses built-in LAPACK routines written in Fortran\footnote{\url{https://numpy.org/devdocs/reference/generated/numpy.linalg.eigh.html}}. For these values of $n$ and $m$ the cubic time complexity is not attained and the constant, linear and quadratic factors are still important.

\vspace{12pt}

\subsection{Bound evaluation} \label{sec:expb} To demonstrate and evaluate the confidence bound as applied to data we compare it to the actual difference between the Nyström reconstruction error and the standard one, as well to the difference between the standard reconstruction error and the reconstruction error for Subset PCA. These quantities are generally not available when applying the Nyström method since they depend on the eigenvalues of the full kernel matrix, but we calculate them here for purposes of illustration.

Here we use a Nyström subset of size $m=50$. We calculate the bound for PCA dimensions $1$ through $10$ and use a confidence level of $0.9$ when calculating the bound. We run the experiments for multiple samples of the Nyström subset and plot the averages for the relevant quantities using $100$ samples. The individual runs for different samples are run in parallel to leverage multi-core CPUs.

We plot the results of the experiments for the first four datasets in Table \ref{tab:datasets} and the kernels in Table \ref{tab:kernels} for different PCA dimensions below in Figures 1, 2 and 3. For the RBF and Cauchy kernels we set the bandwidth to $\sigma = 1$ and for the polynomial kernel we use $R = 1$ and $d = 2$. The RBF and Cauchy kernels are bounded by $\sup_x k(x,x) = 1$ and we normalize the polynomial kernel according to Equation~(\ref{eq:knorm}) before applying it in the experiments. Each plot contains
\begin{enumerate}
    \item The values of the confidence bound (``Conf. bound'')
    \item The difference between the Nyström PCA and standard errors $R_n(\widetilde{V}_d) - R_n(\hat{V}_d^n)$ \newline (``Nyström diff.'')
    \item The difference between the Subset PCA and standard errors $R_n(\hat{V}_d^m) - R_n(\hat{V}_d^n)$ \newline (``Subset diff.'')
\end{enumerate}

Both the Nyström difference, the subset difference and the bound increase as the PCA dimension increases. The bound increases more rapidly as the PCA dimension increases from low values, but levels out for larger values as the tail eigenvalues decrease.

The bound seems fairly conservative for these datasets and these choices of hyperparameters. In real-life applications of the Nyström method the datasets are usually much larger, with the number of data points sometimes in the millions, and then the bound will be significantly smaller. The main purpose of the current experiments is rather to investigate differences between datasets and kernel functions and across PCA dimensions. %The same experiments could not be performed for actual datasets where the Nyström method is selected to make kernel PCA scalable, since in this case the calculation of the eigendecomposition is intractable.
%%% FIGURES %%%

\vspace{12pt}

\begin{figure}[H] \label{fig:rbf}
    \hspace*{0.65in}
    \includegraphics[width=4.9in]{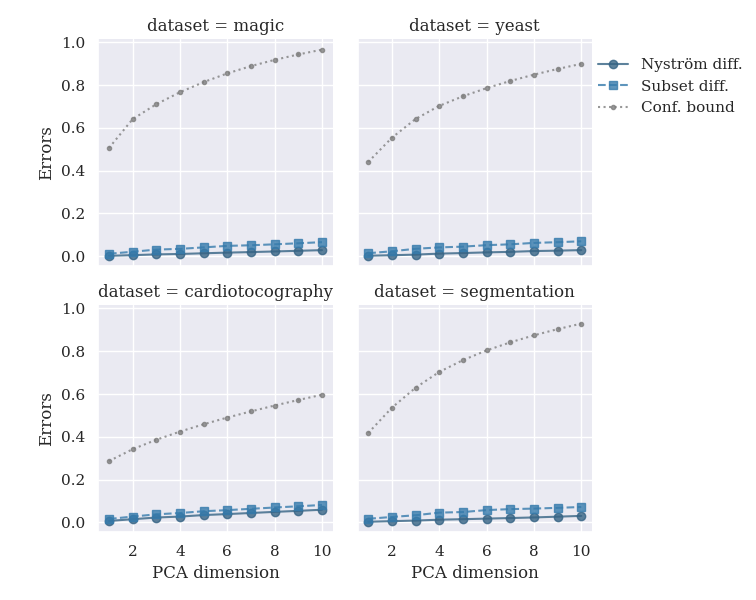}
    \caption{Error comparison with the RBF kernel}
\end{figure}

\emph{}

\begin{figure}[H] \label{fig:poly}
    \hspace*{0.65in}
    \includegraphics[width=4.9in]{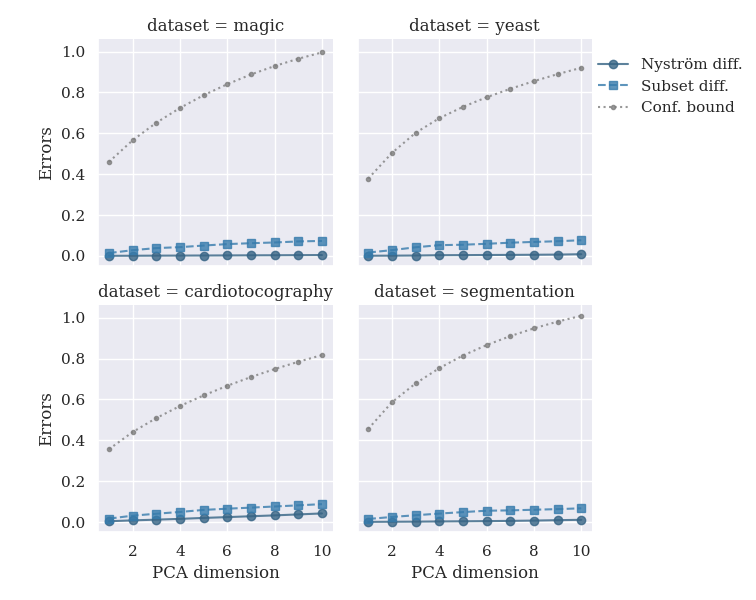}
    \caption{Error comparison with the polynomial kernel}
\end{figure}
~
\begin{figure}[H] \label{fig:cauchy}
    \hspace*{0.65in}
    \includegraphics[width=4.9in]{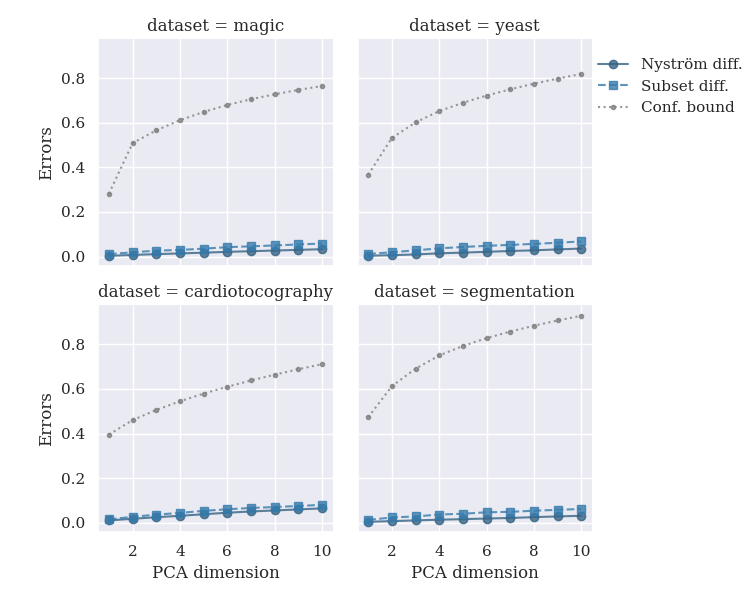}
    \caption{Error comparison with the Cauchy kernel}
\end{figure}

Running the bound evaluation experiments with the command-line tool can be accomplished with the following command%, which uses default values for all parameters

\mybox{\small{\texttt{> nystrompca bound -m 50}}}

\vspace{6pt}

In this section we have presented the experimental results for a few select parameter values. Other combinations can easily be tried after downloading the supplied software package.

% Length before/after equations
\setlength{\abovedisplayskip}{16pt}
\setlength{\belowdisplayskip}{20pt}

\vspace{16pt}

\section{Application: Nyström principal component regression} \label{sec:appl}

\vspace{12pt}

As an application of Nyström kernel PCA we present kernel principal component regression with the Nyström method, or \emph{Nyström kernel PCR}. The proposed method may be used for efficient regularized kernel regression, for example as an alternative to kernel ridge regression with the Nyström method \citep{liang2020just}. Its derivation demonstrates how the principal scores from Nyström kernel PCA may be used as new data points for supervised learning methods.

Principal component regression performs a regression of a target variable onto the principal scores from a subset of the principal components, instead of using the original data as regressor variables \citep[Chapter~8]{jolliffe2002principal}. Principal component regression introduces regularization and ameliorates collinearity of the regressors, which leads to high variances for the coefficient estimates and may especially be a problem for kernel methods. It is known to correspond to the \emph{errors-in-variables} regression model under certain circumstances, where the dependent and independent variables are assumed to contain measurement noise \citep{fuller1980properties}.

We first derive standard kernel PCR, without the Nyström method. This derivation appears to be novel, as previous presentations of kernel principal component regression assumed data to have zero mean in feature space \citep{rosipal2000kernel, rosipal2001kernel, wibowo2012note}.

Suppose thus that each data point $x_i$ is paired with an observation of a target variable $y_i$ in $\R$ which we wish to predict using a new observation $x^*$ of the independent variable. The regression model is
\[
    y =  \alpha + S_d \beta + \varepsilon
\]
with parameters $\alpha$ and $\beta = (\beta_1, \beta_2, ..., \beta_d)^T$, where $y = (y_1, y_2, ..., y_n)^T$, $S_d$ are the principal scores from kernel PCA with respect to the top $d$ principal components,  and $\varepsilon$ is a noise vector $\varepsilon = (\varepsilon_1, \varepsilon_2, ..., \varepsilon_n)^T$, whose components we assume are generated from a zero-mean distribution with finite variance $\mathrm{Var}(\varepsilon_i)$. The intercept is given by $\alpha = \bar{y}$ since the scores have zero mean in each dimension. From Section \ref{sec:background} the principal scores are given by $S_d = Q_d \Lambda^{1/2}_d$, where $Q_d \Lambda_d Q_d^T$ is the truncated eigendecomposition of $K'$. Since we assumed $Z$ to be square-integrable we may apply least squares estimation to obtain that \citep{sen2010finite}
\[
    \hat{\beta}
    = (S_d^TS_d)^{-1} S_d^T y'
    = \Lambda_d^{-1/2} Q_d^T y'
    = S_d^{-1} y'
\]
where $y' = (y_1 - \bar{y}, \, y_2 - \bar{y}, \, ..., \, y_n - \bar{y})^T$. We recall that the principal scores of a new data point $x^*$, which we centre since we estimated the regression for zero-mean data points, are given by, with respect to the top $d$ principal components
\[
    w^*_d
    = \Lambda_d^{-1/2}Q_d^T \kappa'(x^*)
    = \Lambda_d^{-1/2}Q_d^T\left(
        \kappa(x^*) - \one_n \kappa(x^*)  - K \mathbf{1}_n + \one_n K \mathbf{1}_n
    \right)
\]
and so the prediction for a new data point becomes
\[
    \hat{y} = \bar{y} + \beta^T w^{* T}_d = \bar{y} +  y^{\prime \, T} Q_d \Lambda_d^{-1} Q_d^T \kappa'(x^*)
\]
For the Nyström method, the principal scores are given by $W = K^{\prime}_{nm}K_{mm}^{\prime-1/2} V = K'_{nm} U$, and so the principal scores with respect to the top $d$ principal components are given by $W_d = K'_{nm}K_{mm}^{\prime-1/2} V_d = K^{\prime}_{nm} U_d$ where $V_d\widetilde{\Lambda}_dV_d^T$ is the truncated eigendecomposition of $\frac{1}{n}K_{mm}^{\prime -1/2} K'_{mn} K'_{nm} K_{mm}^{\prime-1/2}$ and $U_d = K_{mm}^{\prime \, -1/2} V_d$. The regression model then becomes
\[
    y =  \alpha + W_d \beta + \varepsilon
      =  \alpha + K'_{nm} U_d \beta + \varepsilon
      = \alpha + K'_{nm}K_{mm}^{\prime-1/2} V_d \beta + \varepsilon
\]
The least squares parameter estimates are $\hat{\alpha} = \bar{y}$ and
\begin{align*}
    \hat{\beta}
    &= (W_d^TW_d)^{-1} W_d^T y'
    = \left( V_d^T K_{mm}^{\prime-1/2} K'_{mn} K'_{nm}K_{mm}^{\prime-1/2} V_d\right)^{-1}
      V_d^T K_{mm}^{\prime-1/2} K'_{mn} y' \\[1em]
    &= \left( (V_d^T V \widetilde{\Lambda} V^T  V_d\right)^{-1} V_d^T K_{mm}^{\prime-1/2} K'_{mn} y'
    = \widetilde{\Lambda}_d^{-1} V_d^T K_{mm}^{\prime-1/2} K'_{mn} y'
    = \widetilde{\Lambda}_d^{-1} U_d^T K'_{mn} y'
\end{align*}
And so the prediction becomes
\[
    \hat{y} =
    \bar{y} +
    y^{\prime \, T} K'_{nm} U_d \widetilde{\Lambda}^{-1}_d U_d^T \, \widetilde{\kappa}(x^*)
\]
We implement kernel principal component regression with the Nyström method (Nyström KPCR) in computer experiments and compare it with Nyström kernel ridge regression (Nyström KRR) \citep{rudi2015less}, which is given by\footnote{This is a slightly different specification than in \cite{rudi2015less}, where we have demeaned the target variable and subsumed a factor $n$ into the ridge parameter}
\begin{align*}
    \hat{y} &= \bar{y} + \beta^T \kappa(x^*) \\[0.5em]
    \hat{\beta}   &= (K_{mn}K_{nm} + \gamma K_{mm})^{-1} K_{mn}y'
\end{align*}
where $\gamma \ge 0$ is a regularization parameter, called the ridge parameter.

We use the \texttt{airfoil} dataset from the UCI machine learning repository \citep{dua2019uci}, which describes aerodynamic tests of blades in a wind tunnel from NASA and contains $1503$ data points and $6$ attributes. 

Again we normalize the attributes to have mean $0$ and variance $1$. Note that we must not normalize the entire dataset at once so as to not introduce look-ahead bias in the regression \--- when creating a prediction for a new data point we need to normalize this data point using the mean and variance from the training set.

For this experiment we use the radial basis functions kernel with parameter $\sigma=1$. The source code for these experiments is available in the same package at {\small\url{https://github.com/fredhallgren/nystrompca}}. We estimate the regression on a training dataset with a random sample of 75 \% of all data points, and evaluate the method on a test set with the remaining data points.

We first plot the $R^2$ for the regression on the test set for different subset sizes $m$, ridge parameters $\gamma$ and PCA dimensions $d$ below in Figure 4. We do not specify a random seed for these plots and for each parameter combination a different subset is used.

For Nyström kernel PCR the regression accuracy improves as we increase the number of principal components used in the regression and as the size of the subset increases. For Nyström KRR the accuracy also improves with a larger subset, but the pattern is less clear as we change the regularization parameter.

%%% FIGURE %%%
\begin{figure}[H] \centering
    \begin{subfigure}[b]{0.49\textwidth}
        \includegraphics[width=\textwidth]{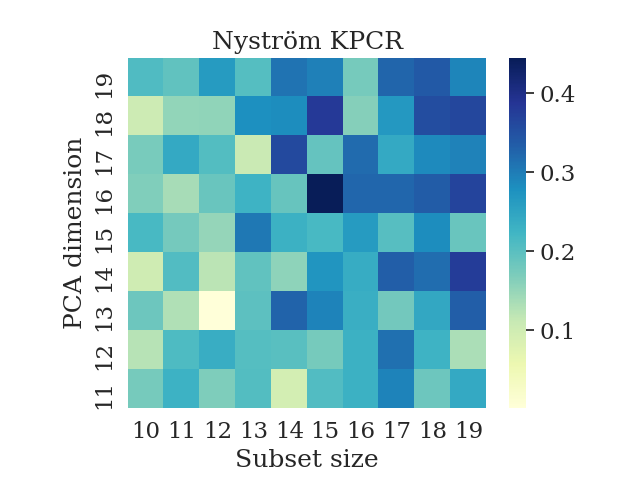}
    \end{subfigure}
    \begin{subfigure}[b]{0.49\textwidth}
        \includegraphics[width=\textwidth]{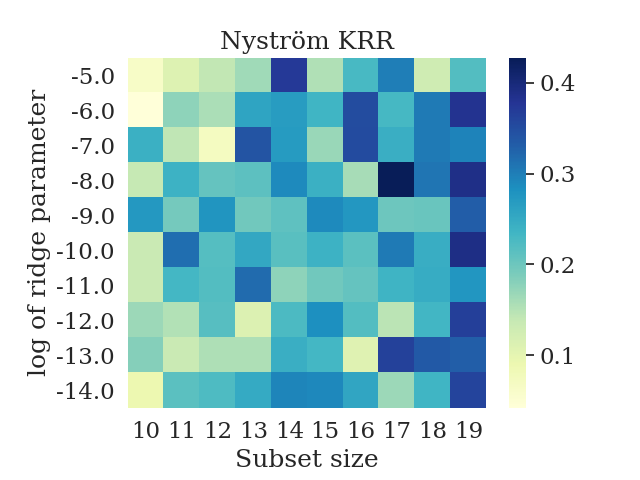}
    \end{subfigure}
    \caption{Heat maps with regression $R^2$}
\end{figure}

To further elucidate the behaviour of the methods we also plot the actual target values versus the predicted ones on the test set for one instance of the parameters. Please see below Figure 5. Here we use $m = 100$, $d = 90$ and $\gamma = 10^{-11}$. The parameters $d$ and $\gamma$ were manually tuned. In this particular example Nyström KPCR obtained an $R^2$ of $0.74$ and Nyström KRR $0.72$ with a seed of $1$.

The scatter plots of the predictions versus the actual targets look as expected for an $R^2$ of around $0.7$. The predictions for the two methods look quite similar, but slightly different characteristics are exhibited by the plots due to the different regularization methodologies.

%%% FIGURE 2 %%%

\begin{figure}[H] \centering
    \begin{subfigure}[b]{0.49\textwidth}
        \includegraphics[width=\textwidth]{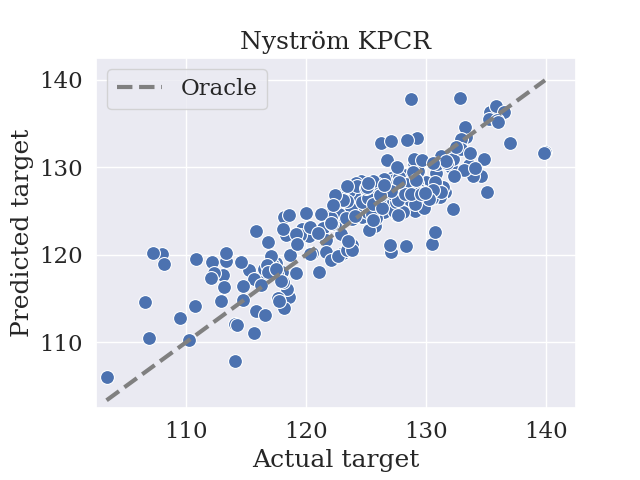}
    \end{subfigure}
    \begin{subfigure}[b]{0.49\textwidth}
        \includegraphics[width=\textwidth]{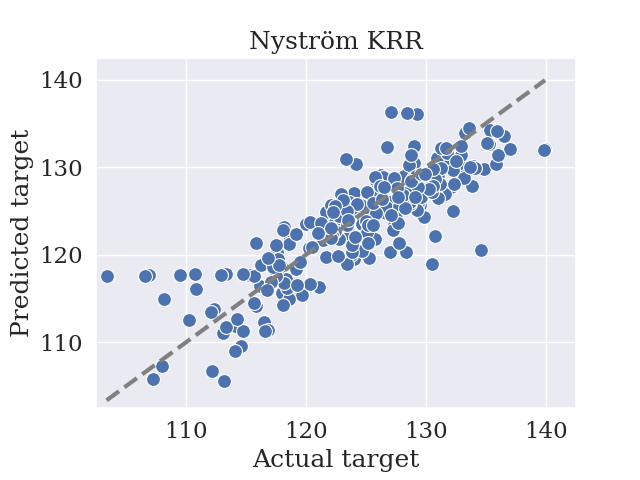}
    \end{subfigure}
    \caption{Scatter plot with regression predictions}
\end{figure}

\newpage
These experiments can also be run with the command-line tool, using the below command
\mybox{\small{\texttt{> nystrompca regression -m 100 -d 90}}}

\vspace{6pt}

where the \texttt{-m} flag changes the size of the subset for the second set of plots, and \texttt{-d} selects the number of PCA dimensions for Nyström kernel PCR in the same plots.  Please see the software documentation for a full list of available command-line options.%, for example to set different values for the ridge parameter.

\vspace{24pt}

\section{Conclusion} \label{sec:conc}

\vspace{12pt}

In this paper we have presented an efficient method for non-linear PCA by deriving the Nyström method for kernel PCA, providing the principal components, explained variance, principal scores and reconstruction error, without assuming that data has zero mean. The Nyström method has been shown in theory and practice to be effective for improving the scalability of kernel methods, but it had not yet been derived for kernel PCA in line with classical PCA. The method presented in this paper is one of the few options available to reduce the computational requirements for kernel PCA while maintaining good accuracy.

We also provided a finite-sample confidence bound on the empirical reconstruction error of the method, which allows us to measure its statistical accuracy for a specific dataset and which can be applied before the entire dataset has been observed. As a corollary to the confidence bound we presented a brief lemma by which a large number of previous results can be sharpened, and of potentially wide application in the future.
%The bound assumes data has zero mean in feature space, but could potentially be adapted to account for centring of data points, although the analysis would become more involved.

The principal scores from the method may be used instead of the original data matrix in any supervised learning method, in order for example to achieve regularization and denoising. We demonstrated this for linear regression by presenting Nyström kernel principal component regression. This derivation also led to a novel specification for standard kernel PCR, where the regressors are centred, as is the case for linear PCR.

We thank you for your interest in this work. We hope that what we have presented in this paper will be useful to the academic community and to industry practitioners, and that it may give ideas about future directions of research.

In addition to linear regression, there are many other methods based on PCA where kernel PCA with the Nyström method could be analyzed and explored, such as when PCA is applied in discriminant analysis, outlier detection or dictionary learning. The latter could be achieved for example along the lines of \cite{golts2016linearized}.

The approximate Nyström kernel matrix $\widetilde{K} = K_{nm}K_{mm}^{-1}K_{mn}$ may often be used as a drop-in replacement for the original kernel matrix to speed up kernel machines. However, for many methods, like kernel PCA, more work is needed for a complete treatment. There are still many kernel methods where application of the Nyström method is not necessarily trivial and has not been fully derived, including potentially kernel FDA or kernel PLS \citep{mika1999fisher, rosipal2001kernelb}

Kernel PCA is closely related to functional PCA \citep{amini2012sampled}. Functional PCA may also suffer from scalability issues if the individual functions are sampled at a large number of points. It's possible that there are settings where the Nyström method could be successfully applied to functional data analysis for improved computational efficiency. 

As outlined in Section~\ref{sec:prevwork} on page~\pageref{sec:prevwork}, since the one existing bound for centred kernel PCA is more conservative than the available bounds for uncentred kernel PCA, there no statistical analysis of kernel PCA available in its full generality that is as precise as with an assumption of zero-mean data. Closing this gap would be a major achievement and in our view one of the most significant potential research contributions for kernel methods in the near term.

In this paper the data points in the Nyström subset are selected randomly from the full dataset to provide a subspace in which the PCA solution is sought, but the selection of these data points is not optimized in any way to best suit the problem at hand. In the Gaussian process literature, the so-called \emph{inducing points}, which are analogous to the Nyström subset, are often selected to be optimal in some sense \citep{wild2021connections}. Other ways to select the Nyström subset for our method could also be explored, which may achieve some measure of optimality, or benefit from improved performance. 

In an ideal world one would not randomly constrain some subspace in pursuit of improved computational performance, a crude proxy without doubt, but instead optimize with respect to the computational resources directly, where some measure of computational complexity is seamlessly included in the estimation procedure. Sadly, such theory does not yet exist. The theory of statistical estimation is conspicuously detached from that of computation, so given some utility of improved statistical accuracy, and some model of computation, there is no way to objectively, or even formally, trade accuracy against computational cost. Until such time, we are reduced to considering a plethora of different models, each with its own separate measures of accuracy and complexity, but with little recourse when attempting to choose between them.

\newpage

\appendix

\newgeometry{margin=1in, top=1in, bottom=1.3in, headsep=0in, footskip=0.6in}

\fancyhf{}
\fancyfoot[C]{{\Small \thepage}}
\fancyfoot[LR]{}

\section{Proofs}

\vspace{16pt}

In this section we present the proofs of Propositions \ref{prop:maj} and \ref{prop:trueeq}, and Theorems \ref{thm:nystrompca}, \ref{thm:subsetpca} and \ref{thm:prob}.

First we state and prove a lemma that is used in the proof of Theorem~\ref{thm:prob}. It shows that the bound of the difference between two positive operators is smaller than the sum of the bounds for the individual operators by a factor $1/\sqrt{2}$. For wider applicability we state this lemma for Hilbert spaces over both the real or complex numbers.

\begin{lemma}[Bound of positive operators] \label{lem:opbound}
Let $L_1$ and $L_2$ be positive operators in $\hs$  over $\R$ or $\C$ with $\|L_1\|_{\hs} \le B$ and $\|L_2\|_{\hs} \le B$. Then $\|L_1 - L_2\|_{\hs} \le \sqrt{2}B$.	
\end{lemma}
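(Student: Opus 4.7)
The plan is to expand the Hilbert--Schmidt norm as an inner product and exploit the fact that the Hilbert--Schmidt inner product of two positive operators is non-negative. Concretely, I would write
\[
    \|L_1 - L_2\|_{\hs}^2
    = \|L_1\|_{\hs}^2 + \|L_2\|_{\hs}^2 - 2\,\mathrm{Re}\langle L_1, L_2 \rangle_{\hs},
\]
so the bound $\|L_1 - L_2\|_{\hs}^2 \le 2B^2$ (and hence $\sqrt{2}B$ after taking the square root) reduces to showing $\mathrm{Re}\langle L_1, L_2 \rangle_{\hs} \ge 0$, with the two squared norms each controlled by $B^2$ by hypothesis.

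The main step is therefore establishing that $\langle L_1, L_2 \rangle_{\hs} \ge 0$ for positive Hilbert--Schmidt operators. I would use the identification of the Hilbert--Schmidt inner product with the trace pairing, $\langle L_1, L_2 \rangle_{\hs} = \tr(L_1^* L_2) = \tr(L_1 L_2)$, where self-adjointness of $L_1$ follows from positivity. Positivity also allows taking a positive square root $L_1^{1/2}$ (and similarly $L_2^{1/2}$), after which I would use the cyclicity of the trace to rewrite
\[
    \tr(L_1 L_2) = \tr(L_1^{1/2} L_2 L_1^{1/2}) = \tr\bigl((L_2^{1/2} L_1^{1/2})^* (L_2^{1/2} L_1^{1/2})\bigr)
    = \|L_2^{1/2} L_1^{1/2}\|_{\hs}^2 \ge 0.
\]
Over $\C$ this quantity is automatically real and non-negative, so the $\mathrm{Re}$ is superfluous; over $\R$ it is immediate.

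The main obstacle is a small technical one: justifying the existence of a Hilbert--Schmidt square root and the trace manipulations in the Hilbert--Schmidt (hence compact) setting on a possibly infinite-dimensional $\Hcal$. Since $L_1, L_2 \in \hs$ are compact and positive, the spectral theorem yields orthonormal eigenbases with summable eigenvalues, from which $L_i^{1/2}$ is well-defined and also Hilbert--Schmidt, and the products $L_2^{1/2} L_1^{1/2}$ and $L_1^{1/2} L_2 L_1^{1/2}$ are trace class, legitimising the cyclic rearrangement. Once these details are in place, combining the inequality $\langle L_1, L_2 \rangle_{\hs} \ge 0$ with the expansion in the first display gives $\|L_1 - L_2\|_{\hs}^2 \le \|L_1\|_{\hs}^2 + \|L_2\|_{\hs}^2 \le 2B^2$, and the claim follows.
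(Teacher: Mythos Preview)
Your proof is correct and follows essentially the same approach as the paper: expand $\|L_1 - L_2\|_{\hs}^2$ and reduce the claim to showing $\langle L_1, L_2 \rangle_{\hs} \ge 0$. The only difference is in how that non-negativity is established: the paper picks the eigenbasis $\{e_i\}$ of $L_1$ and observes that each term $\langle L_1 e_i, L_2 e_i \rangle_{\Hcal} = \lambda_i \langle e_i, L_2 e_i \rangle_{\Hcal} \ge 0$ directly from positivity of $L_2$, whereas you use square roots and trace cyclicity to write $\tr(L_1 L_2) = \|L_2^{1/2} L_1^{1/2}\|_{\hs}^2$; both are standard and equally valid.
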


\begin{proof}
$\|L_1 - L_2\|^2_{\hs} = \|L_1\|^2_{\hs} + \|L_1\|^2_{\hs} - 2\mathrm{Re}\{\langle L_1, L_2 \rangle_{\hs}\} \le 2B^2$ since $\langle L_1, L_2 \rangle_{\hs}$ is real and positive. To see this, we note that $\langle L_1, L_2 \rangle_{\hs} = \sum_{i=1}^{\infty} \langle L_1 e_i, L_2 e_i \rangle_{\Hcal}$ for any basis $\{e_i\}$ in $\Hcal$ and take $\{e_i\}$ to be the eigenvectors of $L_1$, arbitrarily extended to a basis for the entire space if $\mathrm{Ker}(L_1) \neq \{0\}$, obtaining, for each $i$, that $\langle L_1 e_i, L_2 e_i \rangle_{\Hcal} = \langle \lambda_i e_i, L_2 e_i \rangle_{\Hcal} = \lambda_i \langle  e_i, L_2 e_i \rangle_{\Hcal} \in \R_+$ since $L_2$ is positive and $\lambda_i \in \R_+$.
\end{proof}

%\newpage

\begin{proof}[\textbf{Proof of Theorem \ref{thm:nystrompca}}]

Standard principal component analysis finds the perpendicular intersecting lines in $\R^d$ along which the variance of the data is successively maximized \citep{pearson1901principal}. These lines are affine subspaces of $\R^d$ which are orthogonal with respect to the associated vector space. To derive kernel PCA with the Nyström method we apply PCA in the span of the subset of data points $\Hcal_S$, i.e. finding the orthogonal one-dimensional affine subspaces of $\Hcal_S$ where the projected data has maximum variance. These are on the form
\[
    \phi_0 + \langle f_j \rangle
  = \phi_0 + \{ \, a f_j \, | \, a \in \R \,  \}
\]
where $\phi_0 \in \Hcal_S$ is the translation of the vector space $\langle f_j \rangle$, and the $f_j \in \Hcal_S$, taken to have norm one, are the principal components. It is known from standard PCA and kernel PCA that the translation vector is given by the mean of the data points, which in our case is the mean of the data points projected onto $\Hcal_S$. Using $P_{\Hcal_S} = G_m^*(G_mG_m^*)^{-1}G_m = m \cdot G_m^*K_{mm}^{-1}G_m$, where $G_m$ is the sampling operator \citep{rudi2015less}, we obtain
\begin{align*}
\phi_0  
&= \frac{1}{n} \sum_{r=1}^n P_{\Hcal_S}\phi(x_r)
= \frac{1}{n} \sum_{r=1}^n m \cdot G_m^*K_{mm}^{-1} G^{}_m \phi(x_r) \\[1em]
&= \frac{1}{n} \sum_{r=1}^n \sqrt{m} \cdot G_m^*K_{mm}^{-1} \kappa_m(x_r) = \frac{1}{n}K_{nm}K_{mm}^{-1} \kappa_m(x)
\end{align*}

Any element $\phi \in \Hcal_S$ can be written as $\phi = \phi_0 + \sum_{k=1}^m a_k \cdot (\phi(x_k) - \phi_0)$ for some coefficients $a_1, a_2,$\\$ ..., a_m$ and so the principal components are on the form $f_j = \sum_{k=1}^m u_{j,k}(\phi(x_k) - \phi_0)$ with coefficients $u_{j,1}, u_{j,2}, ..., u_{j,m}$. The affine projection of a data point $\phi(x)$ onto $\phi_0 + \langle f_j \rangle$ is then
\[
        P_{\phi_0 + \langle f_j \rangle} \phi(x) = \phi_0 + \langle \phi(x) - \phi_0, f_j  \rangle_{\Hcal} f_j \\[1em]
\]
The variance of the full dataset along $\phi_0 + \langle f_j \rangle$ then becomes
\begin{align*}
    &\mathrm{Var}_{f_j} \left(\{ \phi(x_i) \}_{i=1}^n\right)
    = \frac{1}{n} \sum_{i=1}^n
    \left(
    \phi_0 + \langle \phi(x_i) - \phi_0, f_j \rangle_{\Hcal}
    -
    \frac{1}{n}
    \sum_{\ell=1}^n
    \left( \phi_0 +  \langle \phi(x_{\ell}) - \phi_0, f_j \rangle_{\Hcal} \right)
    \right)^2 \\[1em]
    &=
    \frac{1}{n} \sum_{i=1}^n
    \left\langle
    \phi(x_i) - \frac{1}{n}\sum_{\ell=1}^n \phi(x_{\ell}),
    \sum_{k=1}^m u_{j,k} \left(\phi(x_k) - \phi_0 \right)
    \right\rangle_{\Hcal}^2
    \\[1em]
    &=\frac{1}{n} \sum_{i=1}^n \left( \sum_{k=1}^m u_{j,k}
        \left(
        k_{k,i}
        - \frac{1}{n}\sum_{\ell=1}^n k_{k,\ell}
        - \langle \phi(x_i), \phi_0 \rangle_{\Hcal}
        + \frac{1}{n}\sum_{\ell=1}^n \langle \phi(x_{\ell}), \phi_0 \rangle_{\Hcal}
    \right) \right)^2 \\[1em]
\end{align*}
Using 
\begin{align*}
    &\langle \phi(x_i), P_{\Hcal_S} \phi(x_r) \rangle_{\Hcal}
    = \langle \phi(x_i), m \cdot G_m^*K_{mm}^{-1}G_m \phi(x_r) \rangle_{\Hcal} \\[1em]
    &= \sqrt{m}\langle \phi(x_i), G_m^*K_{mm}^{-1} \kappa_m(x_r) \rangle_{\Hcal}
    = \kappa_m(x_i)^T K_{mm}^{-1} \kappa_m(x_r)
\end{align*}
where $\kappa_m(x) = (k(x_1, x), \; k(x_2, x), \; ..., \; k(x_m, x))^T$, and setting $\kappa_m(x_a) = \kappa_{m,a}$, we obtain
\begin{align*}
    &\frac{1}{n} \sum_{i=1}^n \left( \sum_{k=1}^m u_{j,k}
        \left(
        k_{k,i}
        - \frac{1}{n}\sum_{\ell=1}^n k_{k,\ell}
        - \langle \phi(x_i), \phi_0 \rangle_{\Hcal}
        + \frac{1}{n}\sum_{\ell=1}^n \langle \phi(x_{\ell}), \phi_0 \rangle_{\Hcal}
    \right) \right)^2 \\[1em]
    &=\frac{1}{n} \sum_{i=1}^n \left( \sum_{k=1}^m u_{j,k}
        \left(
        k_{k,i}
        - \frac{1}{n}\sum_{\ell=1}^n k_{k,\ell}
        - \frac{1}{n}\sum_{r=1}^n \kappa_{m,i}^T K_{mm}^{-1} \kappa_{m,r}
        + \frac{1}{n^2}\sum_{\substack{\ell=1 \\ r=1}}^n \kappa_{m,\ell}^T K_{mm}^{-1} \kappa_{m,r}
    \right) \right)^2 \\[1em]
    &= \frac{1}{n} u_j^T K'_{mn}K'_{nm} u_j
\end{align*}
where $K'_{mn} = 
       K_{mn} - K_{mn}\one_n 
        - \one_n^{m, n}K_{nm} K_{mm}^{-1} K_{mn}
        + \one_n^{m, n} K_{nm} K_{m,m}^{-1} K_{mn} \one_n
$, with $\one_n^{m, n}$ an $m \times n$ matrix with each element equal to $\frac{1}{n}$, and $K'_{nm} = K^{\prime \, T}_{mn}$.

The principal components are then given by the orthonormal vectors $f_j = \sum_{k=1}^m u_{j,k}(\phi(x_k) - \phi_0)$, $j = 1, 2, ..., m$ that successively maximize the variance. The inner product between two principal components is

\begin{align*}
    &\langle f_j, f_p \rangle_{\Hcal}
    = \left \langle \sum_{k=1}^m u_{j,k} \left(\phi(x_k) - \phi_0 \right), \,
       \sum_{q=1}^m u_{p,q} \left(\phi(x_q) - \phi_0 \right)\right \rangle_{\Hcal} \\[1em]
    &= \sum_{\substack{k=1 \\ q=1}}^m u_{j,k} u_{p,q} \left(
            k_{k,q}
            - \frac{1}{n} \sum_{r=1}^n \kappa_{m,r} K^{-1}_{mm} \kappa_{m,k}
            - \frac{1}{n} \sum_{\ell=1}^n \kappa_{m,\ell} K^{-1}_{mm} \kappa_{m,q}
            + \frac{1}{n^2} \sum_{\substack{r=1 \\ \ell=1}}^n \kappa_{m,r} K^{-1}_{mm} \kappa_{m,\ell}
       \right) \\[1em]
    &= u_j^T K'_{mm} u_p
\end{align*}
where $K'_{mm} = K_{mm} - \one_n^{m, n} K_{nm} - K_{mn} \one_n^{n, m} + \one_n^{m, n} K_{nm} K^{-1}_{mm} K_{mn} \one_n^{m, n}$. Maximizing the variance therefore becomes a generalized eigenvalue problem. We have
\[
    \langle f_j, f_p \rangle_{\Hcal}
    = u_j^T K'_{mm} u_p
    = \left( K_{mm}^{\prime \, 1/2} u_j \right)^T \left( K_{mm}^{\prime \, 1/2} u_p \right) := v_j^Tv_p
\]
where $K_{mm}^{\prime \, 1/2}$ is the unique positive semi-definite square root of $K'_{mm}$ given by $m \cdot U^m \Lambda^{m \, 1/2} U^{m \, T}$, where $U^m \Lambda^m U^{m \, T}$ is the eigendecomposition of $\frac{1}{m}K_{mm}'$. Therefore the variance can be written
\[
    \frac{1}{n} v_j^T K_{mm}^{\prime \, -1/2} K'_{mn} K'_{nm} K_{mm}^{\prime \, -1/2} v_j
    = \left \langle v_j, \; \frac{1}{n} K_{mm}^{\prime \, -1/2} K'_{mn} K'_{nm} K_{mm}^{\prime \, -1/2} v_j \right \rangle_{\R^m}
\]
Then by the Courant-Fischer-Weyl theorem \citep[Corollary~III.1.2]{bhatia1997matrix} the maximum values over successively orthonormal vectors $v_j$ are given by the eigenvalues of $\frac{1}{n} K_{mm}^{\prime \, -1/2} K'_{mn} K'_{nm} K_{mm}^{\prime \, -1/2}$, and they occur at its eigenvectors. These eigenvectors will be unique (up to a sign), since all data points are different by assumption.

The principal components are then given by
\[
    \widetilde{\phi}_j =  \sum_{k=1}^m u_{j,k} \left( \phi(x_k) - \phi_0 \right) \; \; \;  \; \; \; \; \; \; j = 1, 2, ..., m
\]
where $u_j = K_{mm}^{\prime \, -1/2} v_j$, and the affine subspaces with maximum variances are $\{ \, \phi_0 + t \widetilde{\phi}_j \; | \; t \in \R \, \}$, $j = 1, 2, ..., m$.

The principal score of a centred data point $i$ with respect to the principal component $j$ is given by
\begin{align*}
    w_{j,i}
    &= \left\langle \phi(x_i) - \frac{1}{n}\sum_{\ell=1}^n\phi(x_{\ell}), \,
       \sum_{k=1}^m u_{j,k}(\phi(x_k) - \phi_0) \right\rangle_{\Hcal} \\[1em]
    &=\sum_{k=1}^m u_{j,k} \left(
        k_{k, i}
        - \frac{1}{n}\sum_{\ell=1}^n k_{k, \ell}
        - \frac{1}{n}\sum_{r=1}^n \kappa_{m,i}^T K_{mm}^{-1} \kappa_{m,r}
        + \frac{1}{n^2}\sum_{\substack{\ell=1 \\ r=1}}^n \kappa_{m,\ell}^T K_{mm}^{-1} \kappa_{m,r}
    \right)
\end{align*}
for $j = 1, 2, ..., n$. Or in matrix format
\[
    (w_{i,j}) = W = K'_{nm} U
\]
where $U = K_{mm}^{\prime \, -1/2} V$ and $\frac{1}{n}K_{mm}^{\prime \, -1/2} K'_{mn} K'_{nm} K_{mm}^{\prime \, -1/2} = V \widetilde{\Lambda} V^T$, and so $W =  K'_{nm} K_{mm}^{\prime \, -1/2} V$. 

The scores of a \emph{new} data point $x^*$ which is centred in feature space, i.e. the coordinates of $\phi(x^*) - \frac{1}{n}\sum_{\ell=1}^n \phi(x_{\ell})$ in terms of the principal components, are given by
\begin{align*}
    w_j^*
    &= \left\langle
          \phi(x^*) - \frac{1}{n}\sum_{\ell=1}^n\phi(x_{\ell}), \;
          \sum_{k=1}^m u_{j,k} \left( \phi(x_k) - \phi_0 \right)
      \right\rangle_{\Hcal}
    \\[1em]
    &= \sum_{k=1}^m u_{j,k} \left(
        k(x_k, x^*)
        - \frac{1}{n}\sum_{\ell=1}^n k_{k,\ell}
        - \frac{1}{n}\sum_{r=1}^n \kappa_{m,r}^T K^{-1}_{mm} \kappa_m(x^*)
        + \frac{1}{n^2}\sum_{\substack{r=1 \\ \ell=1}}^n \kappa_{m,r}^T K^{-1}_{mm} \kappa_{m,\ell}
    \right)
\end{align*}
or in matrix format
\begin{align*}
    w^*
    = U^T \left(
        \kappa_m(x^*) - K_{mn} \mathbf{1}_n - \one_n^{m, n}K_{nm}K^{-1}_{mm} \kappa_m(x^*)  + \one_n^{m, n}K_{nm}K^{-1}_{mm}K_{mn} \mathbf{1}_n
    \right)
    := U^T \, \widetilde{\kappa}(x^*)
\end{align*}
where $\mathbf{1}_n$ is a length-$n$ column vector given by $\mathbf{1}_n = (\frac{1}{n}, \frac{1}{n}, \; ..., \; \frac{1}{n})^T$.

\end{proof}

\begin{proof}[\textbf{Proof of Theorem \ref{thm:subsetpca}}]

The projection of a data point $\phi(x_i)$ onto a principal component is given by
\begin{align*}
    P_{\hat{\phi}_j^{m,n}}\phi(x_i)
    &= \frac{1}{\sqrt{m \hat{\lambda}_j^m}} \sum_{k=1}^m u^m_{j,k}
    \left\langle \phi(x_i), \, \phi(x_k) - \phi_0 \right\rangle_{\Hcal}  \hat{\phi}_j^{m,n} \\[1em]
    &= \frac{1}{\sqrt{m \hat{\lambda}_j^m}} \sum_{k=1}^m u^m_{j,k}
    \left( k(x_k, x_i) - \frac{1}{n} K_{nm} K^{-1}_{mm}  \kappa_m (x_i) \right) \hat{\phi}_j^{m,n}
\end{align*}
where $(\hat{\lambda}_j^m, u^m_j)$ is the $j$th eigenpair of $\frac{1}{m}K'_{mm}$ and $u^m_{j,k}$ is the $k$th element of $u^m_{j}$ \citep{shawe2005eigenspectrum}.

The projection of a centred data point $\phi'(x_i)$ is then, similarly to Theorem~\ref{thm:nystrompca}, with $k_{a,b} := k(x_a, x_b)$ and $\kappa_m(x_a) = \kappa_{m,a}$

\begin{align*}
    &P_{\hat{\phi}_j^{m,n}}\phi'(x_i)
    = \frac{1}{\sqrt{m \hat{\lambda}_j^m}} \sum_{k=1}^m u^m_{j,k}
    \left\langle \phi(x_i)
                - \frac{1}{n} \sum_{\ell=1}^n \phi(x_{\ell}) , \,
                \phi(x_k)
                - \phi_0
    \right\rangle_{\Hcal} \hat{\phi}_j^{m,n} = \\[1em]
\end{align*}
\begin{align*}
    &= \frac{1}{\sqrt{m \hat{\lambda}_j^m}} \sum_{k=1}^m u^m_{j,k}
    \left(
        k_{k,i}
        - \frac{1}{n}\sum_{\ell=1}^n k_{k,\ell}
        - \frac{1}{n}\sum_{r=1}^n \kappa_{m,i}^T K_{mm}^{-1} \kappa_{m,r}
        + \frac{1}{n^2}\sum_{\substack{\ell=1 \\ r=1}}^n \kappa_{m,\ell}^T K_{mm}^{-1} \kappa_{m,r}
    \right) \hat{\phi}_j^{m,n}
\end{align*}
Taking the norm and summing over $\phi(x_1), \phi(x_2), ..., \phi(x_n)$ we obtain
\begin{align*}
    &\frac{1}{n} \sum_{i=1}^n \| P_{\hat{\phi}_j^{m,n}}\phi'(x_i) \|_{\Hcal}^2 = \\[1em]
    &\frac{1}{n \cdot m\hat{\lambda}_j^m} \sum_{i=1}^n
    \left(
     \sum_{k=1}^m u^m_{j,k}
    \left(
        k_{k,i}
        - \frac{1}{n}\sum_{\ell=1}^n k_{k,\ell}
        - \frac{1}{n}\sum_{r=1}^n \kappa_{m,i}^T K_{mm}^{-1} \kappa_{m,r}
        + \frac{1}{n^2}\sum_{\substack{\ell=1 \\ r=1}}^n \kappa_{m,\ell}^T K_{mm}^{-1} \kappa_{m,r}
    \right)
    \right)^2 \\[1em]
    &= \frac{1}{n \cdot m\hat{\lambda}_j^m} u^{m \, T}_j K'_{mn}K'_{nm} u^m_j
    \, =: \, \hat{\lambda}_j^{m,n}
\end{align*}
For the reconstruction error we have
\begin{align*}
R_n(\hat{V}_d^m)
&= \frac{1}{n} \sum_{i=1}^n
  \|
    \phi'(x_i) - P_{\hat{V}_d^m} \phi'(x_i)
  \|^2_{\Hcal} 
  =
    \frac{1}{n} \sum_{i=1}^n
    \| \phi'(x_i) \|_{\Hcal} -
    \frac{1}{n} \sum_{i=1}^n
    \left\|
    P_{\hat{V}_d^m} \phi'(x_i)
    \right\|_{\Hcal} \\[1em]
&=
    \frac{1}{n} \tr(K') -
    \frac{1}{n} \sum_{i=1}^n
    \left\|
    P_{\hat{V}_d^m} \phi'(x_i)
    \right\|_{\Hcal} \\[1em]
\end{align*}
And so similarly to above, the second term becomes
\medmuskip=0mu
\thinmuskip=0mu
\thickmuskip=0mu
\begin{align*}
&\frac{1}{n} \sum_{i=1}^n
    \left\| P_{\hat{V}_d^m} \phi'(x_i) \right\|_{\Hcal}^2 = \\[1em]
    &\frac{1}{n} \sum_{i=1}^n
    \left(
    \sum_{j=1}^d
    \frac{1
    }{\sqrt{m \hat{\lambda}_j^m}}
    \sum_{k=1}^m
    u_{j,k}^m
    \left(
        k_{k,i}
        - \frac{1}{n}\sum_{\ell=1}^n k_{k,\ell}
        - \frac{1}{n}\sum_{r=1}^n \kappa_{m,i}^T K_{mm}^{-1} \kappa_{m,r}
        + \frac{1}{n^2}\sum_{\substack{\ell=1 \\ r=1}}^n \kappa_{m,\ell}^T K_{mm}^{-1} \kappa_{m,r}
    \right)
    \right)^2 \\[1em]
    &=\frac{1}{n \cdot m} \tr(K'_{nm}U_d^m \Lambda_d^{m \, \,-1} U_d^{m \,T} K'_{mn})
\end{align*}

with $U^m_d\Lambda^m_d U_d^{m \, T}$ the truncated eigendecomposition of $\frac{1}{m} K'_{mm}$.

\end{proof}

\begin{proof}[\textbf{Proof of Proposition \ref{prop:maj}}]

Since $\hat{V}_d^m \subset \Hcal_S$ for any $d$ and by Theorem~\ref{thm:nystrompca}
\begin{align*}
    \widetilde{\lambda}_{<d}
    &= \max_{\substack{\dim(V) = d \\ a + V \subset \Hcal_S}}
        \frac{1}{n} \sum_{i=1}^n\|P_{a + V} z_i\|^2_{\Hcal}
    = 
    \max_{\substack{\dim(V) = d \\ V \, \subset \Hcal_S \\ a \, \in \Hcal_S}} \frac{1}{n} \sum_{i=1}^n\|P_V (z_i - a)\|^2_{\Hcal} \\[1em]
    &\ge
    \frac{1}{n} \sum_{i=1}^n\|P_{\hat{V}_d^m} (z_i - \phi_0)\|^2_{\Hcal}
    = \hat{\lambda}_{<d}^{m,n}
\end{align*}
The case $d = m$ follows since both $\langle \{\hat{\phi}_j^{m,n}\}_{j=1}^m \rangle$ and $\langle \{\widetilde{\phi}_j\}_{j=1}^m \rangle$ capture the full variance of the data in $\Hcal_S$.

\end{proof}

\begin{proof}[\textbf{Proof of Proposition \ref{prop:trueeq}}]

By the previous proposition we have $\widetilde{V}_m = \hat{V}_m^m$ for a fixed $\omega$ and so we will have $\widetilde{V}_m \overset{d}{=} \hat{V}_m^m$ if $\{X_{i_1}, X_{i_2}, ..., X_{i_m}\} \overset{d}{=} \{X_1, X_2, ..., X_m\}$, where $S = \{i_1, i_2, ..., i_m\}$ are the indices for the subsampled data points. By the law of total probability
\begin{align*}
	&\p(\{X_{i_1} \le a_1, X_{i_2} \le a_2, \; ..., \; X_{i_m} \le a_m \}) \\[1em]
	&= \sum_S \p(\{X_{i_1} \le a_1, X_{i_2} \le a_2, \; ..., \; X_{i_m} \le a_m \} | S) \p(S) \\[1em]
	&= \sum_S \p(\{X_1 \le a_1, X_2 \le a_2, \; ..., \; X_m \le a_m \} | S) \p(S)
\end{align*}
since conditional on the sample $S$, we have $m$ random variables generated according to $\p_X$, which we can take to be $X_1, X_2, ..., X_m$. If the subsampling is independent of the data then
\begin{align*}
	&\sum_S \p(\{X_1 \le a_1, X_2 \le a_2, \; ..., \; X_m \le a_m \} | S) \p(S) \\[0.7em]
	&= \p(\{X_1 \le a_1, X_2 \le a_2, \; ..., \; X_m \le a_m \}) \sum_S \p(S)
	= \prod_{k=1}^m \p(\{X_k \le a_k\})
\end{align*}
so the subsampled data points are generated i.i.d. from $\p_X$. We can therefore conclude that $\widetilde{V}_m \overset{d}{=} \hat{V}_m^m$. Since $Z$ has the same distribution $\p_Z$ regardless of the subspace and since $\widetilde{V}_m \overset{d}{=} \hat{V}_m^m$ we have $P_{\widetilde{V}_m}Z' \overset{d}{=} P_{\hat{V}_m^m}Z'$ and can conclude that, since $Z$ is square-integrable
\[
    \E [\| P_{\widetilde{V}_m}Z' - Z'\|^2_{\Hcal}] = \E [ \| P_{\hat{V}_m^m}Z' - Z' \|^2_{\Hcal}]
\]
and so $R(\widetilde{V}_m) = R(\hat{V}^m_m)$ when $p(S \, | \, x_1, x_2, ..., x_n) = p(S)$.

\end{proof}

\begin{proof}[\textbf{Proof of Theorem \ref{thm:prob}}]

The difference in errors can be rewritten through
\begin{align*}
    R_n(\widetilde{V}_d) - R_n(\hat{V}_d)
    &= \min_{\substack{\dim(V)=d \\ V \subset \Hcal_S}}
\frac{1}{n}
\sum_{i=1}^n
\|P_Vz_i - z_i\|_{\Hcal}^2 -
    \min_{\dim(V)=d}
\frac{1}{n}
\sum_{i=1}^n
\|P_Vz_i - z_i\|_{\Hcal}^2 \\[1em]
&= \max_{\dim(V)=d}
\frac{1}{n}
\sum_{i=1}^n
\|P_Vz_i\|_{\Hcal}^2 -
    \max_{\substack{\dim(V)=d \\ V \subset \Hcal_S}}
\frac{1}{n}
\sum_{i=1}^n \|P_Vz_i\|_{\Hcal}^2 \\[1em]
% New line
    &=
\frac{1}{n}
\sum_{i=1}^n \|
    (P_{\Hcal_S} +
    P_{\Hcal_S^{\perp}})
    P_{\hat{V}_d}z_i
    \|_{\Hcal}^2 -
    \max_{\substack{\dim(V)=d \\ V \subset \Hcal_S}}
\frac{1}{n}
\sum_{i=1}^n
\|P_Vz_i\|_{\Hcal}^2 \\[1em]
    &\le
\frac{1}{n}
\sum_{i=1}^n \|
    P_{\Hcal_S}P_{\hat{V}_d}z_i
    \|_{\Hcal}^2 +
\frac{1}{n}
    \sum_{i=1}^n \|
    P_{\Hcal_S^{\perp}}P_{\hat{V}_d}z_i
    \|_{\Hcal}^2 -
    \max_{\substack{\dim(V)=d \\ V \subset \Hcal_S}}
\frac{1}{n}
\sum_{i=1}^n
\|P_Vz_i\|_{\Hcal}^2 \\[1em]
    &\le
\frac{1}{n}
    \sum_{i=1}^n \|
    P_{ \Hcal_S^{\perp}}P_{\hat{V}_d}z_i
    \|_{\Hcal}^2
\end{align*}
Expanding the projection operator $P_{\hat{V}_d}$ we obtain
\begin{align*}
    &\frac{1}{n}
    \sum_{i=1}^n \|
P_{\Hcal_S^{\perp}}
    P_{\hat{V}_d} z_i
    \|_{\Hcal_S}^2
=
\frac{1}{n}
    \sum_{i=1}^n \left\|
P_{\Hcal_S^{\perp}}
\sum_{j=1}^d \langle z_i, \hat{\phi}^n_j \rangle_{\Hcal} \hat{\phi}^n_j
    \right\|_{\Hcal}^2 \\[1em]
&= \frac{1}{n}
    \sum_{i=1}^n \left\|
\sum_{j=1}^d \langle z_i, \hat{\phi}^n_j \rangle_{\Hcal}
P_{\Hcal_S^{\perp}}
\hat{\phi}^n_j
    \right\|_{\Hcal}^2
\le
\frac{1}{n}
    \sum_{i=1}^n
\sum_{j=1}^d
\left\|
\langle z_i, \hat{\phi}^n_j \rangle_{\Hcal}
P_{\Hcal_S^{\perp}}
\hat{\phi}^n_j
    \right\|_{\Hcal}^2
\end{align*}
The last inequality is fairly sharp. It becomes an equality without the projection $P_{\Hcal_S^{\perp}}$, and the further the projection is from the identity, the smaller the norm of $P_{\Hcal_S^{\perp}}\hat{\phi}^n_j$. Now we have
\begin{align*}
    &\frac{1}{n}
    \sum_{i=1}^n
\sum_{j=1}^d
\left\|
    \langle z_i, \hat{\phi}^n_j \rangle_{\Hcal}
P_{\Hcal_S^{\perp}}
\hat{\phi}^n_j
    \right\|_{\Hcal}^2
=
\frac{1}{n}
    \sum_{i=1}^n
\sum_{j=1}^d
|\langle z_i, \hat{\phi}^n_j \rangle_{\Hcal}|^2
\left\|
P_{\Hcal_S^{\perp}}
\hat{\phi}^n_j
    \right\|_{\Hcal}^2 \\[1em]
    &=
\sum_{j=1}^d
\left(
\frac{1}{n}
    \sum_{i=1}^n
|\langle z_i, \hat{\phi}^n_j \rangle_{\Hcal}|^2
\right)
\left\|
P_{\Hcal_S^{\perp}}
\hat{\phi}^n_j
    \right\|_{\Hcal}^2
=
\sum_{j=1}^d
\hat{\lambda}_j^n
\left\|
P_{\Hcal_S^{\perp}}
\hat{\phi}^n_j
    \right\|_{\Hcal}^2
\end{align*}
Expanding the other projection operator we get
\begin{align*}
\sum_{j=1}^d
\hat{\lambda}_j^n
\left\|
P_{\Hcal_S^{\perp}}
\hat{\phi}^n_j
    \right\|_{\Hcal}^2
=
\sum_{j=1}^d
\hat{\lambda}_j^n
\left\|
\hat{\phi}^n_j
-
P_{\Hcal_S}
\hat{\phi}^n_j
    \right\|_{\Hcal}^2
=
\sum_{j=1}^d
\hat{\lambda}_j^n
\left\|
\hat{\phi}^n_j
-
\sum_{k=1}^m
\langle
\hat{\phi}^n_j,
\hat{\phi}_k^m
\rangle_{\Hcal}
\hat{\phi}_k^m
    \right\|_{\Hcal}^2 
\end{align*}
We have, for any $j$
\begin{align*}
\left\|
\hat{\phi}^n_j
-
\sum_{k=1}^m
\langle
\hat{\phi}^n_j,
\hat{\phi}_k^m
\rangle_{\Hcal}
\hat{\phi}_k^m
    \right\|_{\Hcal}^2 
\le
\left\|
\hat{\phi}^n_j
-
\langle
\hat{\phi}^n_j,
\hat{\phi}_j^m
\rangle_{\Hcal}
\hat{\phi}_j^m
    \right\|_{\Hcal}^2
= 1 - 
\langle
\hat{\phi}^n_j,
\hat{\phi}_j^m
\rangle_{\Hcal}^2
=
\sin^2\theta_j
\end{align*}
Then by the Davis-Kahan theorem \citep[Corollary 1]{yu2015useful} (also see \cite{davis1970rotation}), defining $\hat{\lambda}^m_0 := +\infty$ and $\hat{\lambda}^m_{m+1} := -\infty$
\begin{align*}
  \sin \theta_j
  \le \frac{2\| C_n - C_m\|_{\hs}}{\min\{
    \hat{\lambda}_{j-1}^m - \hat{\lambda}_j^m,
    \hat{\lambda}_j^m - \hat{\lambda}_{j+1}^m
  \}}
  \wedge 1 =: \sqrt{D_j}
\end{align*}
%%%%%%%%%%%%%%%%%%%%%%%%%%%%%%%%%%%%%%%%%%%%%%%%%%%%%%%%%%%%%%%%%%%%%%%
%%% Keeping all dimensions (but doesn't make much difference... %%%%%%%
%%%%%%%%%%%%%%%%%%%%%%%%%%%%%%%%%%%%%%%%%%%%%%%%%%%%%%%%%%%%%%%%%%%%%%%
\begin{comment}
But we can also write
% NOTE we actually don't need absolute value, the RHS will always be positive
\begin{align*}
\left\|
\hat{\phi}^n_j
-
\sum_{k=1}^m
\langle
\hat{\phi}^n_j,
\hat{\phi}_k^m
\rangle_{\Hcal}
\hat{\phi}_k^m
    \right\|_{\Hcal}^2 
=
1 -
\sum_{k=1}^m 
\langle
\hat{\phi}^n_j,
\hat{\phi}_k^m
\rangle_{\Hcal}^2
=
1 - \sum_{k=1}^m\left( 1 - \sin^2\theta_{j,k} \right)
\le
\sum_{k=1}^m\sin^2\theta_{j,k} - (m-1)
\end{align*}
Again by the Davis-Kahan theorem \emph{TODO}
\[
\sin\theta_{j,k}
\le
\frac{2\|C_n - C_m\|_{\hs}}{\min \left\{\hat{\lambda}_1^m - \hat{\lambda}_2^m, \hat{\lambda}_2^m - \hat{\lambda}_3^m \right\}} =: c_2
\]
And so
\begin{align*}
&
\sum_{k=1}^m\sin^2\theta_{j,k} - (m-1)
\\[1em]
&\le
...
\\[1em]
=: D_j
\end{align*}
\end{comment}
% TODO: we can use the sup norm, we need this to apply the D-K variant, and then we get the square root of B, sharpening (unless my operator bound sharpening doesn't work, although should be immediate...)
Then by Lidskii's inequality \citep[Chapter~3, Theorem~6.11]{kato2013perturbation}
\begin{align*}
    \sum_{j=1}^d
     \hat{\lambda}_j^n \cdot D_j
=
    \sum_{j=1}^d
    \hat{\lambda}_j^m \cdot D_j
    +
    \sum_{j=1}^d
    \left(
    \hat{\lambda}^n_j - \hat{\lambda}_j^m
    \right) \cdot D_j
\le
    \sum_{j=1}^d
    \hat{\lambda}_j^m \cdot D_j
    +
    \|C_n - C_m\|_{\hs}
    \max_{1\le k \le d}D_k
\end{align*}
Now the only unknown and random quantity is $\| C_n - C_m \|_{\hs}$. It depends both on the unobserved data points $z_{m+1}, z_{m+2}, ..., z_n$ and the observed ones $z_1, z_2, ..., z_m$. We split these up into two terms
\begin{align*}
    \|C_n - C_m  \|_{\hs}
        &= \left\| \frac{1}{n}\sum_{i=1}^n z_i \otimes z_i
        - \frac{1}{m}\sum_{r=1}^m z_r \otimes z_r
        \right\|_{\Htsr} \\[0.8em]
       &= \left\| \frac{1}{n}\sum_{i=m+1}^n z_i \otimes z_i
        - \frac{n  - m}{nm}\sum_{r=1}^m z_r \otimes z_r
        \right\|_{\Htsr} \\[0.8em]
        &= \frac{n-m}{n} \left\|
            \frac{1}{n-m}\sum_{i=m+1}^n z_i \otimes z_i
            - \frac{1}{m}\sum_{r=1}^m z_r \otimes z_r
        \right\|_{\Htsr} \\[0.8em]
        &=  \frac{n-m}{n} \left\|
        C_{n-m} - C_m
         \right\|_{\hs}
\end{align*}
If we let $Y_i = z_i \otimes z_i - C_m$, then $\frac{1}{n-m}\sum_{i=m+1}^n Y_i = C_{n-m} - C_m$ and the random variables $Y_i$ have zero expectation with respect to $C_{n-m}$ across hypothesized repeated realizations of $C_m$, and they are bounded by $\sqrt{2}B:= \sqrt{2}\sup_xk(x,x)$ by Lemma~\ref{lem:opbound} since both $z_i \otimes z_i$ and $C_m$ are positive. Then by Hoeffding's inequality in Banach spaces \citep[Theorem~3.5]{pinelis1994optimum}, we have that with probability at least $1-2e^{-\delta}$, over infinite repetitions of the experiment yielding $C_m$
\[
    \frac{n-m}{n}
        \left\|
            \frac{1}{n-m}\sum_{i=m+1}^n z_i \otimes z_i
	    -
            \frac{1}{m}\sum_{r=1}^m z_r \otimes z_r
         \right\|_{\Htsr}
    \le \frac{n-m}{n}\frac{ 2B\sqrt{\delta}}{\sqrt{n-m}}
    %=: D
\]
% TODO maybe remove this to make the proof shorter, since it's not necessary
\begin{comment}
and so also with probability at least $1-2e^{-\delta}$ that
\[
D_j
  \le \frac{(2D)^2}{\min\{
    \hat{\lambda}_{j-1}^m - \hat{\lambda}_j^m,
    \hat{\lambda}_j^m - \hat{\lambda}_{j+1}^m
  \}^2}
  \wedge 1
\]
\end{comment}
We recall that the eigenvalues of the empirical covariance operator equal the eigenvalues of the kernel matrix $\frac{1}{m} K_{mm}$, which completes the proof.

\end{proof}

\newpage

\fancyhf{}
\fancyfoot[C]{{\Small \thepage}}
\fancyfoot[LR]{}
\interlinepenalty=10000

\vspace{-30pt}

% Spacing after heading
\renewcommand{\bibpreamble}{\vskip0.9cm}

% Spacing between items
\setlength{\bibsep}{12pt plus 1ex}

\bibliography{mybibfile}
\bibliographystyle{abbrvnat}

\end{document}